\theoremstyle{definition} 
\newtheorem{definition}{Definition}
\theoremstyle{remarkstyle}
\newtheorem{remark}{Remark}
\def\bbx{{\ensuremath{\boldsymbol{x}}}}
\def\bbs{{\ensuremath{\boldsymbol{s}}}}
\def \unbbs {\underline{\bbs}}
\def \ovbbs {\overline{\bbs}}
\newtheorem{theorem}{Theorem}
\newtheorem{corollary}[theorem]{Corollary}
\newtheorem{proposition}{Proposition}
\begin{document}

\title{Matched Topological Subspace Detector}

\author{
    Chengen Liu,~\IEEEmembership{Student Member,~IEEE,} 
    Victor M. Tenorio,~\IEEEmembership{Student Member,~IEEE,} 
    Antonio G. Marques,~\IEEEmembership{Senior Member,~IEEE,} 
    and Elvin Isufi,~\IEEEmembership{Senior Member,~IEEE}

    \thanks{
        Chengen Liu and Elvin Isufi are with the Faculty of Electrical Engineering, Mathematics and Computer Science, Department of Intelligent Systems, Delft University of Technology (\{c.liu-15,e.isufi-1\}@tudelft.nl).
        
        Victor M. Tenorio and Antonio G. Marques are with the Department of Signal Theory and Communications, King Juan Carlos University (\{victor.tenorio,antonio.garcia.marques\}@urjc.es).
        
        
        
        A preliminary version of this work was presented in~\cite{liu2024hodge}. This paper is supported by the Dutch Grant GraSPA (No. 19497) financed by the Netherlands Organization for Scientific Research (NWO), by the Spanish AEI (AEI/10.13039/501100011033), grants PID2022-136887NB-I00, PID2023-149457OB-I00, and FPU20/05554, the Community of Madrid via IDEA-CM (TEC-2024/COM-89) and the Ellis Madrid Unit, and by the EU H2020 Grant Tailor (No 952215, agreements 76 and 82). Chengen Liu receives funding from the China Scholarship Council.
    }
}


\maketitle

\begin{abstract}
Topological spaces, represented by simplicial complexes, capture richer relationships than graphs by modeling interactions not only between nodes but also among higher-order entities, such as edges or triangles. This motivates the representation of information defined in irregular domains as topological signals. By leveraging the spectral dualities of Hodge and Dirac theory, practical topological signals often concentrate in specific spectral subspaces (e.g., gradient or curl). For instance, in a foreign currency exchange network, the exchange flow signals typically satisfy the arbitrage-free condition and hence are curl-free. However, the presence of anomalies can disrupt these conditions, causing the signals to deviate from such subspaces. In this work, we formulate a hypothesis testing framework to detect whether simplicial complex signals lie in specific subspaces in a principled and tractable manner. Concretely, we propose Neyman-Pearson matched topological subspace detectors for signals defined at a single simplicial level (such as edges) or jointly across all levels of a simplicial complex. The (energy-based projection) proposed detectors handle missing values, provide closed-form performance analysis, and effectively capture the unique topological properties of the data. We demonstrate the effectiveness of the proposed topological detectors on various real-world data, including foreign currency exchange networks.
\end{abstract}

\begin{IEEEkeywords}
Simplicial signal processing, detection theory, topological signal processing, matched subspace detection
\end{IEEEkeywords}

\section{Introduction}
\IEEEPARstart{T}{opological} signals, such as those arising in simplicial complexes~\cite{bick2023higher,salnikov2018simplicial}, encode a more nuanced structure compared to graph signals by supporting multiway relationships among higher-order elements. Graph signals primarily focus on pairwise interactions between nodes, whereas topological signals can represent interactions among multiple entities simultaneously. In financial markets, for example, transactions may involve more than two companies at a time, and in protein molecules, the functional relationships may extend beyond simple binary interactions. Recent advances in signal processing and machine learning have introduced a variety of tools to handle topological signals~\cite{isufi2024topological,barbarossa2020topological,schaub2021signal}, including specialized convolutional and trend filtering techniques~\cite{yang2022simplicial,yang2022simplicialtrend}, neural networks~\cite{yang2022simplicialconvolutional,battiloro2024generalized}, Fourier analysis~\cite{barbarossa2020topological}, autoregressive models~\cite{krishnan2023simplicial}, signal recovery methods~\cite{reddy2024recovery}, and simplicial random walks~\cite{schaub2020random}.

The Hodge Laplacian provides an algebraic representation of topological structures and enables a spectral decomposition of simplicial signals~\cite{barbarossa2020topological,lim2020hodge,schaub2021signal}. Specifically, any simplicial signal of a given order can written as the sum of three orthogonal components, each lying on a subspace given by the decomposition of the Hodge Laplacian of that order. Focusing on edge signals, for instance, one can decompose them into three mutually orthogonal components: gradient, curl, and harmonic. Each component lives in a corresponding Hodge subspace and offers distinct insights into the nature of the signal~\cite{yang2022simplicial,isufi2022convolutional}. To \textit{jointly} incorporate signals defined at multiple orders (e.g., node, edge, and triangle signals), the Dirac operator~\cite{bianconi2021topological,baccini2022weighted} extends this idea, decomposing the entire space of simplicial signals into Dirac gradient, Dirac curl, and Dirac harmonic subspaces. 


These subspaces provide a better characterization of practical topological signals, as they often exhibit special properties such as being divergence-free or curl-free~\cite{liu2023unrolling}. A divergence-free signal implies that the inflow equals the outflow at each node, meaning there is no gradient component. For example, in traffic networks, where nodes represent intersections and edges correspond to roads, the traffic flow edge signal is nearly divergence-free, as vehicles entering a node will eventually exit it, assuming no congestion~\cite{yang2022simplicial,schaub2018flow}. Similarly, a curl-free signal implies that circulation within each triangle is zero. A notable example is found in the foreign exchange market, where nodes represent currencies and edges denote exchange possibilities. Under the arbitrage-free condition, the exchange rate edge flow is curl-free, ensuring that no profit can be obtained through a closed loop of transactions involving three currencies~\cite{jia2019graph}. However, when abnormalities occrr, such as noisy or incomplete measurements~\cite{jia2019graph,schaub2018flow} or adversarial attacks, these conditions no longer hold. In the case of traffic networks, congestion disrupts the divergence-free condition, introducing gradient components into the edge signal. Similarly, inaccuracies in exchange rate values violate the arbitrage-free condition, causing the edge signal to deviate from being curl-free.

To detect topological anomalies in a principled and mathematically tractable manner, we develop a topological matched subspace detection (MSD) framework inspired by the standard MSD approach~\cite{scharf1994matched}. MSD has a long and successful history across various applications, including communications~\cite{mccloud2000interference}, radar~\cite{rangaswamy2004robust}, and anomaly detection~\cite{stein2002anomaly}. MSD formulates the detection of a signal residing in a specific subspace as a hypothesis testing problem, leveraging the energy of the projected signal in the orthogonal complement of the target subspace. More recently, MSD has been applied to subgraph detection on graphs~\cite{chepuri2016subgraph}. However, existing graph-based MSD methods cannot effectively handle topological signals, which exhibit more intricate and intrinsic relationships among them.  
%
%
Additionally, prior works typically assume full availability of all signals, whereas in real-world applications, this assumption often does not hold. To address this limitation, we further extend the MSD framework to accommodate incomplete topological signals.


More specifically, we make the following contributions:

\begin{itemize}
\item [\text { 1) }]
We develop a topological MSD framework based on Hodge theory, generalizing MSD on graphs (node signals) without imposing any assumptions on the order of the underlying simplicial signal. More precisely, we formulate a hypothesis testing problem to determine whether a simplicial signal resides in a specific Hodge subspace. The test statistic for this detection task is derived using the generalized likelihood ratio test (GLRT), and its performance is characterized in closed form.
\item [\text { 2) }]
We extend topological MSD to jointly detect simplicial complex signals across all orders via Dirac theory. Additionally, we establish connections between the Hodge and Dirac MSD tasks, analyze their asymptotic performance, and demonstrate how joint signals can enhance Hodge-based detection tasks.
\item [\text { 3) }]
We address topological MSD in the presence of missing values. Specifically, we derive the optimal detector based on GLRT by projecting onto the subspace of interest. Furthermore, we analyze how the relationship between the dimension of the target subspace and the number of missing values leads to overdetermined and underdetermined cases.
\end{itemize}

The effectiveness of these detectors is validated through experiments on real-world datasets, including currency exchange markets, user-item interactions, water networks, and football games.

The remainder of the paper is organized as follows. Sec.~\ref{S:prelim} introduces preliminary concepts, while Sec.~\ref{S:problem} motivates and formulates the problem of interest. Sec.~\ref{S:complete} presents the MSD framework for both simplicial and simplicial complex signals based on Hodge and Dirac theory. Sec.~\ref{S:missing} discusses the optimal detector for scenarios with missing values. Sec.~\ref{S:exps} reports numerical experiments, and Sec.~\ref{S:conclusion} concludes the paper.

\section{Preliminaries} \label{S:prelim}
\subsection{Simplicial Complexes}
Let $\ccalV$ be a set containing $N_0$ vertices. Our goal is to define ${\mathcal P}^K$, which is a simplicial complex of order $K\leq N_0$ defined over $\ccalV$. To that end, we first introduce the $k$-simplex $\ccalW^k$, which is a set containing $k+1\leq N_0$ vertices of $\mathcal{V}$. Then, a simplicial complex $\ccalP^K$ of order $K$ is a collection of $k$-simplices (all defined over $\ccalV$ with $k=0,1,\ldots,K$) that satisfy the so-called ``inclusion property''. 
To be specific, let $N_k$ denote the number of $k$-simplices in ${\mathcal P}^K$. Then, the simplicial complex $\ccalP^K$ is formed by $\{\ccalW_n^0\}_{n=1}^{N_0}$, $\{\ccalW_n^1\}_{n=1}^{N_1}$, \dots, $\{\ccalW_n^K\}_{n=1}^{N_K}$, with $N=\sum_{k=0}^K N_k$ being the total number of simplices in $\ccalP^K$. Additionally, to satisfy the inclusion property, it must hold that for any $\ccalW_n^k \in \mathcal{P}^K$, all the subsets of $\ccalW_n^k$ are also part of the simplicial complex ${\mathcal P}^K$. To gain intuition, when embedding the simplicial complex in the Euclidean space, a 0-simplex corresponds to a node, a 1-simplex to an edge, and a 2-simplex to a triangle; see Fig.~\ref{fig:example}. The inclusion property implies that for a triangle to exist, all its edges and nodes must also be part of the simplicial complex. Additionally, it follows that a graph can be regarded as a simplicial complex of order $K=1$, as it contains only nodes and edges.

We consider the reference orientation of a simplex as the lexicographical ordering of the vertices, and represent the connections between different simplices by the incidence matrices $\mathbf{B}_k \in \mathbb{R}^{N_{k-1} \times N_k}$ describing the relationship between ($k$-1)-simplices and $k$-simplices~\cite{barbarossa2020topological}. Based on these incidence matrices, the structure of a simplicial complex can be represented by the \emph{Hodge Laplacian} matrices defined as
\begin{equation}\label{eq:hodge_lap}
\left\{\begin{aligned}
& \mathbf{L}_0=\mathbf{B}_1 \mathbf{B}_1^\top, \\
& \mathbf{L}_k=\underbrace{\mathbf{B}_k^\top \mathbf{B}_k}_{\mathbf{L}_{k,\ell}}+\underbrace{\mathbf{B}_{k+1} \mathbf{B}_{k+1}^\top}_{\mathbf{L}_{k,u}}, k=1, \ldots, K-1, \\
& \mathbf{L}_K=\mathbf{B}_K^\top \mathbf{B}_K.
\end{aligned}\right.
\end{equation}
Any \textit{intermediate} Laplacian matrix of order $k=1,\ldots,K-1$ contains two terms, which are the \emph{lower Laplacian} $\mathbf{L}_{k,\ell} = \mathbf{B}_k^\top \mathbf{B}_k$ and the \emph{upper Laplacian} $\mathbf{L}_{k,u} =\mathbf{B}_{k+1} \mathbf{B}_{k+1}^\top$. They encode respectively the lower adjacencies (e.g., two edges are adjacent via a common node) and upper adjacencies (e.g., two edges are adjacent by being the faces of the same triangle). For example, in Fig.~\ref{fig:example}, the edges $(1, 2)$ and $(2, 3)$ are lower adjacent, while the edges $(3, 4)$ and $(4, 5)$ are upper adjacent.

\subsection{Simplicial Signals}
Simplicial \textit{signals} are mappings from simplices to the set of real numbers. A $k-$simplicial signal, for short $k-$signal, $\boldsymbol{s}^k=\left[s_1^k, \ldots, s_{N_k}^k\right]^{\top} \in \mathbb{R}^{N_k}$ is a vector supported on $k$-simplices where each entry $s_n^k$ corresponds to the $n$-th $k$-simplex~\cite{barbarossa2020topological}. If the element $s_n^k$ is positive, the orientation of the signal is the same as the reference, and opposite otherwise. For example, in Fig.~\ref{fig:example}, the reference orientations of the $1-$simplices (edges) are denoted by the arrows.
A simplicial complex signal is defined as the concatenation of all $k-$signals
\begin{equation}\label{eq:simplicial_complexes_signal}
\boldsymbol{s}=\left[
\begin{matrix}
\boldsymbol{s}^0\\
\vdots \\
\boldsymbol{s}^K
\end{matrix}\right] \in \mathbb{R}^{N},
\end{equation}
where we recall that $N=\sum_{k=0}^K N_k$.

\subsection{Hodge Decomposition}
Hodge Laplacians admit a \emph{Hodge decomposition} stating that the space of $k-$signals can be decomposed into three orthogonal subspaces~\cite{lim2020hodge}
\begin{equation}\label{eq:hodge_decomposition}
\mathbb{R}^{N_k} \equiv \operatorname{span}\left(\mathbf{B}_k^\top\right) \oplus \operatorname{kernel}\left(\mathbf{L}_k\right) \oplus \operatorname{span}\left(\mathbf{B}_{k+1}\right)
\end{equation}
where $\oplus$ denotes the direct sum, and $\operatorname{span}$ and $\operatorname{kernel}$ denotes the column space and kernel (nullspace) of a matrix. It implies that any simplicial signal $\boldsymbol{s}^k$ of order $k$ can be expressed as a sum of three signals of order $k-1$, $k$ and $k+1$ fulfilling that, when multiplied by the respective incidence matrices as\footnote{Note that, as indicated by the use of a different notation, the induced signals $\unbbs^0$ and $\ovbbs^2$ in \eqref{eq:hodge_signal_decomp} and~\eqref{eq:edgeflow_decomp} are not the simplicial signals $\boldsymbol{s}^k$ that form the simplicial complex signal~\eqref{eq:simplicial_complexes_signal}.} 

\begin{equation}\label{eq:hodge_signal_decomp}
\boldsymbol{s}^k=\mathbf{B}_k^\top \unbbs^{k-1}+\boldsymbol{s}_\mathrm{H}^k+\mathbf{B}_{k+1} \ovbbs^{k+1},
\end{equation}
are orthogonal to each other. Here, the harmonic component $\boldsymbol{s}_\mathrm{H}^k \in \operatorname{kernel}\left(\mathbf{L}_k\right)$ is a solution of $\mathbf{L}_k \boldsymbol{s}_\mathrm{H}^k =\mathbf{0}$.
%

Without loss of generality, consider the edge space (1-signal) to illustrate the Hodge decomposition. The $\operatorname{span}\left(\mathbf{B}_1^\top\right)$, $\operatorname{span}\left(\mathbf{B}_{2}\right)$ and $\operatorname{kernel}\left(\mathbf{L}_1\right)$ are the gradient, the curl, and the harmonic subspace with dimension $N_{1,G}$, $N_{1,C}$ and $N_{1,H}$, respectively.
These subspaces have a direct connection with the eigenvectors of the corresponding Hodge Laplacian. More specifically, let us denote eigendecomposition of the Hodge Laplacian as
\begin{equation}\label{eq:first_order_lap}
\mathbf{L}_1=\mathbf{U}_1 \boldsymbol{\Lambda}_1 \mathbf{U}_1^{\top}
\end{equation}
where the column vectors of $\mathbf{U}_1\in \mathbb{R}^{N_1\times N_1}$ form an orthonormal basis, and $\boldsymbol{\Lambda}_1=\text{diag}(\lambda_1, \ldots , \lambda_{N_1}) \in \mathbb{R}^{N_1\times N_1}$ is a diagonal matrix containing the eigenvalues $\lambda_i$. The columns of the matrix $\mathbf{U}_1$ can be rearranged as $[\mathbf{U}_{1,\mathrm{G}}\;\mathbf{U}_{1,\mathrm{C}}\;\mathbf{U}_{1,\mathrm{H}}]$ where $\mathbf{U}_{1,\mathrm{G}}$, $\mathbf{U}_{1,\mathrm{C}}$ and $\mathbf{U}_{1,\mathrm{H}}$ collect the eigenvectors that span the gradient, curl and harmonic orthogonal subspaces~\cite{schaub2021signal}. Then, the Hodge decomposition implies that
\begin{equation}\label{eq:edgeflow_decomp}
\boldsymbol{s}^1=\boldsymbol{s}_{\mathrm{G}}^{1}+\boldsymbol{s}_\mathrm{H}^1+\boldsymbol{s}_{\mathrm{C}}^{1},\;\mathrm{with}\;\boldsymbol{s}_{\mathrm{G}}^{1}=\mathbf{B}_1^\top \unbbs^{0} \;\mathrm{and}\; \boldsymbol{s}_{\mathrm{C}}^{1}=\mathbf{B}_{2}
\end{equation}
where $\boldsymbol{s}_{\mathrm{G}}^{1}$, $\boldsymbol{s}_{\mathrm{C}}^{1}$ and $\boldsymbol{s}_\mathrm{H}^1$ are defined as the gradient, curl and harmonic component, respectively. The explanation of the subspace eigenvectors and the corresponding component (see also Fig. 1) are as follows:

\begin{itemize}
\item [\text { • }]
\textit{Gradient eigenvectors and gradient component}: the columns of $\mathbf{U}_{1,\mathrm{G}}\in \mathbb{R}^{N_1\times N_{1,G}}$ are the eigenvectors of $\mathbf{L}_{1,\ell}$ corresponding to the eigenvalues $\lambda_{\mathrm{G},i}>0$. The gradient component $\boldsymbol{s}_{\mathrm{G}}^{1}=\mathbf{B}_1^\top \boldsymbol{s}^{0}\in \operatorname{span} (\mathbf{B}_1^\top)$ is a  $1-$signal (edge signal) induced by a $0-$signal (node signal) and lives in the gradient space. It is computed by taking the difference between the node signal in the nodes connected by an edge. The projection of $\boldsymbol{s}^{1}$ onto the gradient subspace $\hat{\boldsymbol{s}}_{\mathrm{G}}^{1}=\mathbf{U}_{1,\mathrm{G}}^{\top} \boldsymbol{s}^{1}=\mathbf{U}_{1,\mathrm{G}}^{\top} \boldsymbol{s}_{\mathrm{G}}^{1} \in \mathbb{R}^{N_{{1,G}}}$ is the gradient embedding~\cite{yang2022simplicial}.
\item [\text { • }]
\textit{Curl eigenvectors and curl component}: the columns of $\mathbf{U}_{1,\mathrm{C}}\in \mathbb{R}^{N_1\times N_{1,C}}$ are the eigenvectors of $\mathbf{L}_{1,u}$ corresponding to the eigenvalues $\lambda_{\mathrm{C},i}>0$. The curl component $\boldsymbol{s}_{\mathrm{C}}^{1}=\mathbf{B}_{2} \boldsymbol{s}^{2}\in \operatorname{span} (\mathbf{B}_2)$ is an $1-$signal induced by a $2-$signal (triangle signal) and lives in the curl space. It is a local flow circulating along each triangle. The projection of $\boldsymbol{s}^{1}$ onto the curl subspace $\hat{\boldsymbol{s}}_{\mathrm{C}}^{1}=\mathbf{U}_{1,\mathrm{C}}^{\top} \boldsymbol{s}^{1}=\mathbf{U}_{1,\mathrm{C}}^{\top} \boldsymbol{s}_{\mathrm{C}}^{1} \in \mathbb{R}^{N_{{1,C}}}$ is the curl embedding~\cite{yang2022simplicial}.
\item [\text { • }]
\textit{Harmonic eigenvectors and harmonic component}: the columns of $\mathbf{U}_{\mathrm{H}}\in \mathbb{R}^{N_1\times N_{1,H}}$ are the eigenvectors of $\mathbf{L}_1$ corresponding to the zero eigenvalues $\lambda_{\mathrm{H},i}=0$. The harmonic component $\boldsymbol{s}_\mathrm{H}^1\in \operatorname{kernel}(\mathbf{L}_1)$ is an $1-$signal in the harmonic space $\operatorname{kernel}{(\mathbf{L}_1)}$ satisfying $\mathbf{L}_1 \boldsymbol{s}_\mathrm{H}^1=\mathbf{0}$. The projection of $\boldsymbol{s}^{1}$ onto the harmonic subspace $\hat{\boldsymbol{s}}_{\mathrm{H}}^{1}=\mathbf{U}_{1,\mathrm{H}}^{\top} \boldsymbol{s}^{1}=\mathbf{U}_{1,\mathrm{H}}^{\top} \boldsymbol{s}_{\mathrm{H}}^{1} \in \mathbb{R}^{N_{{1,H}}}$ is the harmonic embedding~\cite{yang2022simplicial}.
\end{itemize}

The projection of a specific component onto other Hodge subspaces is zero due to the orthogonality between different subspaces. This implicit and apparently simple property will play a major role in developing an MSD theory for topological signals. Two significant properties, which are common for real-world signals, stem from these three components:
\begin{itemize}
\item [\text { • }]
\textit{Curl-free}: $\operatorname{curl}(\boldsymbol{s}^{1})=\mathbf{B}_2^{\top} \boldsymbol{s}^{1} \in \reals^{N_2}$ is the curl operator which measures the curl of a $1-$signal (edge signal) $\boldsymbol{s}^{1}$. The $\emph{i}$th element of this vector represents the sum of the total flow circulating along the $\emph{i}$th triangle. An edge signal is curl-free if $\operatorname{curl}(\boldsymbol{s}^{1})=\mathbf{0}$. By definition, the gradient and harmonic components are curl-free. For instance, the currency exchange flow satisfying the arbitrage free condition is curl-free~\cite{jia2019graph}.
\item [\text { • }]
\textit{Divergence-free}: $\operatorname{div}(\boldsymbol{s}^{1})=\mathbf{B}_1 \boldsymbol{s}^{1} \in \reals^{N_0}$ is the divergence operator which measures the divergence of an edge signal $\boldsymbol{s}^{1}$. The $\emph{i}$th element of this vector represents the difference between the inflow and outflow at the $\emph{i}$th node. An edge signal is divergence-free if $\operatorname{div}(\boldsymbol{s}^{1})=\mathbf{0}$. By definition, the curl and harmonic components are divergence-free. For example, the Lastfm player transition flow is approximately divergence-free since the player is always switching between different artists~\cite{liu2023unrolling}.
\end{itemize}

\begin{figure}[t]
\centering
    \begin{minipage}[t]{0.24\linewidth}
        \centering
        \includegraphics[width=20mm]{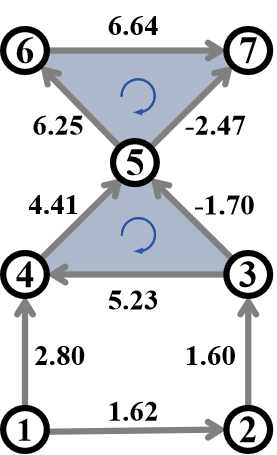}
        \centerline{(a) edge signal $\boldsymbol{s}^1$.}
    \end{minipage}%
    \begin{minipage}[t]{0.24\linewidth}
        \centering
        \includegraphics[width=20mm]{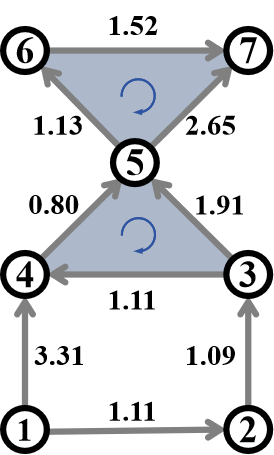}
        \centerline{(b) $\boldsymbol{s}_{\mathrm{G}}^{1}$.}
    \end{minipage}
    \begin{minipage}[t]{0.24\linewidth}
        \centering
        \includegraphics[width=20mm]{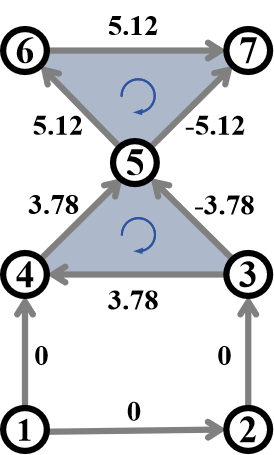}
        \centerline{(c) $\boldsymbol{s}_{\mathrm{C}}^{1}$.}
    \end{minipage}
    \begin{minipage}[t]{0.24\linewidth}
        \centering
        \includegraphics[width=20mm]{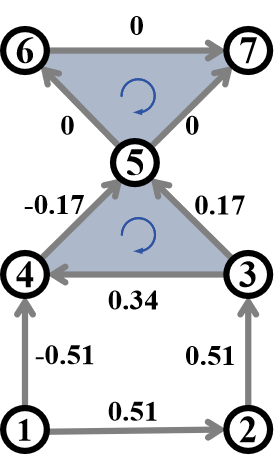}
        \centerline{(d) $\boldsymbol{s}_{\mathrm{H}}^{1}$.}
    \end{minipage}
\caption{Hodge decomposition of a $1-$signal (edge signal) on a simplicial complexes of order two. This edge signal can be decomposed into three different components: the gradient $\boldsymbol{s}_{\mathrm{G}}^{1}$, the curl $\boldsymbol{s}_{\mathrm{C}}^{1}$ and the harmonic component $\boldsymbol{s}_{\mathrm{H}}^{1}$.}
\label{fig:example}
\end{figure}

\subsection{Dirac Decomposition}
The Hodge decomposition limits the spectral processing to individual level simplicial signals. That is, it focuses on processing the $k-$signal using the spectrum of Laplacian $\mathbf{L}_k$, without taking into account the interrelationship between signals of varying orders. For a comprehensive approach to processing signals across all simplicial levels and utilizing their inter-simplicial connections, we can turn to the Dirac operator~\cite{calmon2022higher,baccini2022weighted}. Specifically, given a simplicial complex $\mathcal{P}^K$ of order $K$, the Dirac operator $\mathbf{D} \in \mathbb{R}^{N \times N}$ is defined as
\begin{equation}\label{eq:dirac}
\mathbf{D}=\!
\left[\begin{array}{ccccccc}
\mathbf{0} & \mathbf{B}_1 & \mathbf{0} & \cdots & \mathbf{0} & \mathbf{0} & \mathbf{0} \\
\mathbf{B}_1^\top & \mathbf{0} & \mathbf{B}_2 & \ddots & \mathbf{0} & \mathbf{0} & \mathbf{0} \\
\mathbf{0} & \mathbf{B}_2^\top & \mathbf{0} & \ddots & \mathbf{0} & \mathbf{0} & \mathbf{0} \\
\vdots     &    \ddots        & \ddots     & \ddots & \ddots     & \ddots &\vdots \\
\mathbf{0} & \mathbf{0} & \mathbf{0} & \ddots & \mathbf{0} & \mathbf{B}_{K-1} & \mathbf{0} \\
\mathbf{0} & \mathbf{0} & \mathbf{0} &   \ddots     & \mathbf{B}_{K-1}^\top & \mathbf{0} & \mathbf{B}_K \\
\mathbf{0} & \mathbf{0} & \mathbf{0} & \cdots & \mathbf{0} & \mathbf{B}_K^\top & \mathbf{0}
\end{array}\right]\!.
\end{equation}
The square of Dirac operator is a block diagonal matrix of the form $\mathbf{D}^2=\mathcal{L}= \operatorname{blkdiag}(\{\mathbf{L}_k\}_{k=0}^K)$, where $\operatorname{blkdiag}$ represents the block diagonal matrix whose diagonal is formed by the matrices $\{\mathbf{L}_k\}_{k=0}^K$.
%

To facilitate explanation, we focus next on simplicial complexes with an order of $K=2$.
The Dirac operator $\mathbf{D}$ can be broken down into $\mathbf{D}=\mathbf{D}_l+\mathbf{D}_u$, where 
\begin{equation}\label{eq:dirac_decomp}
\mathbf{D}_l=\left[\begin{array}{ccc}
\mathbf{0} & \mathbf{B}_1 & \mathbf{0} \\
\mathbf{B}_1^{\top} & \mathbf{0} & \mathbf{0} \\
\mathbf{0} & \mathbf{0} & \mathbf{0}
\end{array}\right], \; \mathbf{D}_u=\left[\begin{array}{ccc}
\mathbf{0} & \mathbf{0} & \mathbf{0} \\
\mathbf{0} & \mathbf{0} & \mathbf{B}_2 \\
\mathbf{0} & \mathbf{B}_2^{\top} & \mathbf{0}
\end{array}\right] .
\end{equation}
This implies that the space of the simplicial complex signals $\boldsymbol{s}=\left[\boldsymbol{s}^0\|\boldsymbol{s}^1\| \boldsymbol{s}^2\right] \in \mathbb{R}^N$ can be decomposed into three orthogonal subspaces, mirroring the scenario with $k-$signals and the Hodge Laplacian
\begin{equation}\label{eq:dirac_subspaces}
\mathbb{R}^N \equiv\operatorname{span}\left(\mathbf{D}_l\right) \oplus \operatorname{span}\left(\mathbf{D}_u\right) \oplus \operatorname{kernel}\left(\mathbf{D}\right)
\end{equation}
where $\operatorname{span}\left(\mathbf{D}_l\right)$ is the Dirac (or joint) gradient subspace considering the node potentials and the gradient flows jointly with dimension $N_G$; $\operatorname{span}\left(\mathbf{D}_u\right)$ is the Dirac (or joint) curl subspace considering the curl flows and the triangle potentials jointly with dimension $N_C$; and $\operatorname{kernel}\left(\mathbf{D}\right)$ is the Dirac (or joint) harmonic subspace with dimension $N_H$.
Thus, any simplicial complex signal $\boldsymbol{s}$ of order 2 can be expressed as a sum of three orthogonal signals
\begin{equation}\label{eq:dirac_signal_decomp}
\boldsymbol{s}=\boldsymbol{s}_\mathrm{G}+\boldsymbol{s}_\mathrm{C}+\boldsymbol{s}_\mathrm{H}
\end{equation}
where $\boldsymbol{s}_\mathrm{G} \in \operatorname{span}\left(\mathbf{D}_l\right)$, $\boldsymbol{s}_\mathrm{C} \in \operatorname{span}\left(\mathbf{D}_u\right)$ and $\boldsymbol{s}_\mathrm{H} \in \operatorname{kernel}(\mathbf{D})$. Therefore, the matrix of eigenvectors of $\mathbf{D}$ can be rearranged as
\begin{equation}\label{eq:dirac_reorder}
\mathbf{U}_\mathcal{P}=\left[\begin{array}{lll}
\mathbf{U}_{\mathcal{P}\mathrm{G}},\; & \mathbf{U}_{\mathcal{P}\mathrm{C}},\; & \mathbf{U}_{\mathcal{P}\mathrm{H}}
\end{array}\right]
\end{equation}
where $\mathbf{U}_{\mathcal{P}\mathrm{G}}\in \mathbb{R}^{N\times N_{G}}$ and $\mathbf{U}_{\mathcal{P}\mathrm{C}}\in \mathbb{R}^{N \times N_{C}}$ contain the non-zero eigenvectors of $\mathbf{D}_l$ and $\mathbf{D}_u$, respectively, and the columns of $\mathbf{U}_{\mathcal{P}\mathrm{H}}\in \mathbb{R}^{N \times N_{H}}$ span $\operatorname{kernel}(\mathbf{D})$. These matrices of eigenvectors can be computed from the singular vectors of the incidence matrices $\mathbf{B}_1$ and $\mathbf{B}_2$~\cite{calmon2022higher}.


\section{Problem formulation} \label{S:problem}
In practical scenarios, topological signals are often confined to specific topological subspaces, as indicated in equations~\eqref{eq:hodge_decomposition} or~\eqref{eq:dirac_subspaces}. This is particularly true for signals that are curl-free or divergence-free. However, anomalies do not follow this pattern; their signals typically span multiple subspaces. Consequently, identifying the subspaces to which a signal belongs is crucial for detecting anomalies or patterns in simplicial complex signals. The primary objective of this paper is to determine whether a simplicial complex signal $\boldsymbol{s}$ resides in certain topological subspaces, even when only noisy (and possibly incomplete) measurements are available.

More formally, let $\boldsymbol{x} = \boldsymbol{\Theta}(\boldsymbol{s} + \boldsymbol{n}) \in \mathbb{R}^{N_o}$ be the subset of noisy measurements, where $\boldsymbol{\Theta} \in \{0,1\}^{N_o \times N}$ is a sampling matrix with one 1 per row, selecting the $N_o$ available entries. Here, $\boldsymbol{s}$ denotes the noise-free and complete simplicial complex signal, and $\boldsymbol{n}$ is a zero-mean Gaussian noise vector $\boldsymbol{n} \sim \mathcal{N}\bigl(\boldsymbol{0}, \sigma^2 \mathbf{I}_{N}\bigr)$. Our problem of interest can be formulated as the following hypothesis test:
\begin{equation} \label{eq:hyp_test}
\begin{array}{l}
\mathcal{H}_0: \; \boldsymbol{s} \text{ resides within a specific topological subspace $\ccalS_\ccalP$}\\
\mathcal{H}_1: \; \boldsymbol{s} \text{ does not belong to $\ccalS_\ccalP$.}
\end{array}
\end{equation}
We address the hypothesis testing problem~\eqref{eq:hyp_test} using noisy and potentially incomplete data $\boldsymbol{x}\in \mathbb{R}^{N_o}$. When $\boldsymbol{\Theta} = \mathbf{I}$, we deal with complete data, as discussed in Section~\ref{S:complete}. Otherwise, we handle missing data, addressed in Section~\ref{S:missing}.

\section{Detection with complete signal} \label{S:complete}
In this section, we consider the simplicial detection task with complete data, i.e., $\boldsymbol{\Theta} = \bbI$. We begin by describing the Hodge subspace detection problem in Section~\ref{sub:hodge_complete}. We then extend our approach to the Dirac subspace detector in Section~\ref{sub:dirac_complete} and, in Section~\ref{sub:connections_hodge_dirac}, explore the relationships between the two.

\subsection{Hodge Subspace Detector} \label{sub:hodge_complete}
Consider that the $k$-simplicial signal $\boldsymbol{s}^{k}$ resides in a specific Hodge subspace, which can be written as a linear combination of the following eigenvectors:
\begin{equation}\label{eq:hodge_eigen}
\mathbf{U}_{\Delta} \in \{\mathbf{U}_{\mathrm{G}}, \mathbf{U}_{\mathrm{C}}, \mathbf{U}_{\mathrm{H}}, [\mathbf{U}_{\mathrm{G}},\mathbf{U}_{\mathrm{C}}], [\mathbf{U}_{\mathrm{G}},\mathbf{U}_{\mathrm{H}}], [\mathbf{U}_{\mathrm{C}},\mathbf{U}_{\mathrm{H}}]\}.
\end{equation}
The columns of $\bbU_{\Delta} \in \reals^{N_k \times N_{\Delta}}$ span the subspace of interest, which can be a combination of two of the Hodge subspaces eigenvectors such as $[\mathbf{U}_{\mathrm{G}},\mathbf{U}_{\mathrm{H}}]$. If $\boldsymbol{s}^k \in \operatorname{span} (\bbU_{\Delta})$, it is possible to write $\boldsymbol{s}^k = \bbU_{\Delta} \hat{\boldsymbol{s}}_{\Delta}^k$, with $\hat{\boldsymbol{s}}_{\Delta}^k \in \reals^{N_{\Delta}}$ containing the coefficients associated with each of the $N_{\Delta}$ vectors in the columns of $\bbU_{\Delta}$.

Likewise, we consider the complement (orthogonal) eigenvectors to $\mathbf{U}_{\Delta}$ which are the corresponding element of
\begin{equation}\label{eq:hodge_complement_eigen}
\mathbf{U}_{\overline{\Delta}} \in \{ [\mathbf{U}_{\mathrm{C}},\mathbf{U}_{\mathrm{H}}], [\mathbf{U}_{\mathrm{G}},\mathbf{U}_{\mathrm{H}}], [\mathbf{U}_{\mathrm{G}},\mathbf{U}_{\mathrm{C}}],\mathbf{U}_{\mathrm{H}}, \mathbf{U}_{\mathrm{C}}, \mathbf{U}_{\mathrm{G}}\}.
\end{equation}
whose $N_{\overline{\Delta}}$ columns span a complement Hodge subspace for $\boldsymbol{s}^{k}$. For instance, the foreign currency exchange rate flow tends to be curl-free and should align with the column space of $\bbU_{\Delta} = [\mathbf{U}_{\mathrm{G}},\mathbf{U}_{\mathrm{H}}]$, while the complement subspace would be $\mathbf{U}_{\overline{\Delta}} = \mathbf{U}_{\mathrm{C}}$.

Under this setting, the hypothesis testing problem is 
\begin{equation} \label{eq:hyp_hodge0}
\begin{array}{l}
\mathcal{H}_0: \; \boldsymbol{x}^{k} = \mathbf{U}_{\Delta}\hat{\boldsymbol{s}}^{k}_{\Delta} + \boldsymbol{n}^{k}\\
\mathcal{H}_1: \; \boldsymbol{x}^{k}= \mathbf{U}\hat{\boldsymbol{s}}^{k} + \boldsymbol{n}^{k},
\end{array}
\end{equation}
where $\hat{\boldsymbol{s}}^k \in \reals^{N_k}$ ($\hat{\boldsymbol{s}}^k_{\Delta} \in \reals^{N_\Delta}$) contains the coefficients associated with each of the eigenvectors of $\bbU$ ($\bbU_\Delta$). In essence, we assess whether the simplicial signal of interest $\boldsymbol{x}^k$ can be expressed as a linear combination of the columns of $\bbU_{\Delta}$ or if it contains a component beyond the subspace defined by those columns.

Multiplying both sides of~\eqref{eq:hyp_hodge0} by $\mathbf{U}_{\overline{\Delta}}^{\top}$ yields the projection of $\boldsymbol{x}^{k}$ onto the complement subspace $\hat{\boldsymbol{x}}^{k}_{\overline{\Delta}} =\mathbf{U}_{\overline{\Delta}}^{\top}\boldsymbol{s}^{k}+\mathbf{U}_{\overline{\Delta}}^{\top}\boldsymbol{n}^{k}=\hat{\boldsymbol{s}}^{k}_{\overline{\Delta}}+\hat{\boldsymbol{n}}^{k}_{\overline{\Delta}}$, with $\hat{\boldsymbol{s}}^{k}_{\overline{\Delta}}$ and $\hat{\boldsymbol{n}}^{k}_{\overline{\Delta}}$ representing the projections of the clean signal and noise onto the complement subspace, respectively. The projected noise satisfies $\hat{\boldsymbol{n}}^{k}_{\overline{\Delta}} \sim \mathcal{N}\left(\boldsymbol{0}_{N_{\overline{\Delta}}}, \sigma^2 \mathbf{I}_{N_{\overline{\Delta}}}\right)$.

Under hypothesis $\mathcal{H}_0$, the signal $\boldsymbol{s}^{k}$ lives in the Hodge subspace spanned by the columns of $\mathbf{U}_{\Delta}$ and the projection $\mathbf{U}_{\overline{\Delta}}^{\top}\boldsymbol{s}^{k}$ is $\boldsymbol{0}$ due to the orthogonality between the eigenvectors $\mathbf{U}_{\overline{\Delta}}^{\top}\mathbf{U}_{\Delta} = \boldsymbol{0}$. Thus, the projection of $\boldsymbol{x}^{k}$ onto the complement subspace under $\mathcal{H}_0$ is only noise $\hat{\boldsymbol{n}}^{k}_{\overline{\Delta}}$. Differently, under hypothesis $\mathcal{H}_1$, the projection is not only noise. Therefore, the hypothesis test takes the form
\begin{equation} \label{eq:hyp_hodge}
\begin{array}{l}
\mathcal{H}_0: \; \hat{\boldsymbol{x}}^{k}_{\overline{\Delta}} = \hat{\boldsymbol{n}}^{k}_{\overline{\Delta}}\\
\mathcal{H}_1: \; \hat{\boldsymbol{x}}^{k}_{\overline{\Delta}} =\mathbf{U}_{\overline{\Delta}}^{\top}\boldsymbol{s}^{k}+ \hat{\boldsymbol{n}}^{k}_{\overline{\Delta}}
\end{array}.
\end{equation}
This is a classical matched subspace detection problem when a signal is corrupted by noise~\cite{scharf1994matched}, in which we have to decide whether the projection onto the orthogonal subspace has a signal component or it is just noise. The problem of detecting deterministic signals with unknown parameters can be solved by the standard GLRT
\begin{equation} \label{eq:glrt}
T(\hat{\boldsymbol{x}}^{k}_{\overline{\Delta}})=\frac{p\left(\hat{\boldsymbol{x}}^{k}_{\overline{\Delta}} ;\hat{\boldsymbol{s}}_{\overline{\Delta} 1}^{k*}, \mathcal{H}_1\right)}{p\left(\hat{\boldsymbol{x}}^{k}_{\overline{\Delta}};\hat{\boldsymbol{s}}_{\overline{\Delta} 0}^{k*}, \mathcal{H}_0\right)} \underset{\mathcal{H}_0}{\stackrel{\mathcal{H}_1}{\gtrless}} \gamma
\end{equation}
where $p\left(\hat{\boldsymbol{x}}^{k}_{\overline{\Delta}};\hat{\boldsymbol{s}}_{\overline{\Delta} j}^{k*}, \mathcal{H}_j\right)$ is the probability density function (pdf) of $\hat{\boldsymbol{x}}^{k}_{\overline{\Delta}}$, $\hat{\boldsymbol{s}}_{\overline{\Delta} j}^{k*}$ is the maximum likelihood estimator (MLE) of $\hat{\boldsymbol{s}}^{k}_{\overline{\Delta}}$ under hypothesis $\mathcal{H}_j, j\in \{0, 1\}$ and $\gamma$ is the decision threshold.
When the test statistic $T(\hat{\boldsymbol{x}}^{k}_{\overline{\Delta}})$ exceeds (is below) the threshold $\gamma$, the detector determines that hypothesis $\mathcal{H}_1$ ($\mathcal{H}_0$) is true. Therefore, $\gamma$ controls the false-alarm and detection probabilities (the lower this threshold, the fewer times we will decide $\ccalH_0$ and viceversa).

Under a zero-mean Gaussian noise, the probability density function is
\begin{equation}\label{eq:gaussian_dis}
p\left(\hat{\boldsymbol{x}}^{k}_{\overline{\Delta}} ;\hat{\boldsymbol{s}}_{\overline{\Delta} j}^{k*}, \mathcal{H}_j\right)= \ccalN \left( \hat{\boldsymbol{x}}^{k}_{\overline{\Delta}}; \hat{\boldsymbol{s}}_{\overline{\Delta} j}^{k*}, \sigma^2 \bbI_{N_{\overline{\Delta}}} \right).
\end{equation}
Clearly, the MLE $\hat{\boldsymbol{s}}_{\overline{\Delta} j}^{k*}$ is $\hat{\boldsymbol{s}}_{\overline{\Delta}}^{k*}=\hat{\boldsymbol{x}}^{k}_{\overline{\Delta}}$ under hypothesis $\mathcal{H}_1$ and is $\hat{\boldsymbol{s}}_{\overline{\Delta}}^{k*}=\boldsymbol{0}$ under hypothesis $\mathcal{H}_0$. Thus the Hodge subspace detector becomes
\begin{equation} \label{eq:detector_hodge}
T(\hat{\boldsymbol{x}}^{k}_{\overline{\Delta}})=\|\hat{\boldsymbol{x}}^{k}_{\overline{\Delta}}\|_2^2  /\sigma^2\underset{\mathcal{H}_0}{\stackrel{\mathcal{H}_1}{\gtrless}} \gamma,
\end{equation}
which compares the SNR of the signal projected onto the column space of $\mathbf{U}_{\overline{\Delta}}$ with the threshold $\gamma$.

Given the Gaussian distribution of $\hat{\boldsymbol{x}}^{k}_{\overline{\Delta}}$, the test statistic $T(\hat{\boldsymbol{x}}^{k}_{\overline{\Delta}})$ in~\eqref{eq:detector_hodge} has a well-known a Chi-square distribution
\begin{equation}\label{eq:chi_sq}
T(\hat{\boldsymbol{x}}^{k}_{\overline{\Delta}}) \sim \begin{cases}\chi_{N_{\overline{\Delta}}}^2  & \text { under } \mathcal{H}_0 \\ \chi_{N_{\overline{\Delta}}}^2(\delta)& \text { under } \mathcal{H}_1\end{cases}
\end{equation}
where $N_{\overline{\Delta}}$ are the degrees of freedom and $\delta$ is a noncentrality parameter satisfying $\delta=\big\|\hat{\boldsymbol{s}}^{k}_{\overline{\Delta}}\big\|_2^2/\sigma^2$.
The higher the non-centrality parameter, the further apart from each other the distributions, and the easier the detection task. In the next section, we will see how considering simplicial complex signals under the Dirac setting gives a higher value of this parameter and therefore enhances the performance of the detector. Before that, we are in a position to characterize the performance of the Hodge detector. Given the distribution of the test statistic, the probability of false alarm is 
\begin{equation}\label{eq:pfa}
\text{P}_{\mathrm{FA}} \triangleq \operatorname{Pr}\{T(\hat{\boldsymbol{x}}^{k}_{\overline{\Delta}})>\gamma ; \mathcal{H}_0\}=Q_{\chi_{N_{\overline{\Delta}}}^2}\left(\gamma\right),
\end{equation}
and the probability of detection as
\begin{equation}\label{eq:pd}
\text{P}_{\text{D}}\triangleq \operatorname{Pr}\{T(\hat{\boldsymbol{x}}^{k}_{\overline{\Delta}})>\gamma ; \mathcal{H}_1\} =Q_{\chi_{N_{\overline{\Delta}}}^2(\delta)}(\gamma),
\end{equation}
where $Q_{\chi_{N_{\overline{\Delta}}}^2}\left(\cdot\right)$ is the right-tail probability function of the Chi-square distribution.


\subsection{Dirac Subspace Detector} \label{sub:dirac_complete}


Now, our objective is to determine whether the simplicial complex signal \(\boldsymbol{s}\) lies in certain Dirac subspaces spanned by any of the following eigenvectors:
\begin{equation}\label{eq:dirac_eigen}
\begin{aligned}
\mathbf{U}_{\mathcal{P}\Delta} \in \{ 
&\mathbf{U}_{\mathcal{P}\mathrm{G}}, 
\mathbf{U}_{\mathcal{P}\mathrm{C}}, 
\mathbf{U}_{\mathcal{P}\mathrm{H}}, [\mathbf{U}_{\mathcal{P}\mathrm{G}},\mathbf{U}_{\mathcal{P}\mathrm{C}}], \\
&
[\mathbf{U}_{\mathcal{P}\mathrm{G}},\mathbf{U}_{\mathcal{P}\mathrm{H}}], 
[\mathbf{U}_{\mathcal{P}\mathrm{C}},\mathbf{U}_{\mathcal{P}\mathrm{H}}]
\}.
\end{aligned}
\end{equation}
or any subcombination thereof\footnote{For example, if the simplicial signal has a sparse representation in the joint gradient subspace and in the joint curl subspace, then \(\mathbf{U}_{\mathcal{P}\Delta}\) could be built using only those eigenvectors.}. Analogous to the Hodge setting, let \(\mathbf{U}_{\mathcal{P}\overline{\Delta}}\) be the complement eigenvectors.

As in~\eqref{eq:hyp_hodge}, the hypothesis test for the Dirac setting can be restated as:
\begin{equation}\label{eq:hyp_test_dirac}
\begin{array}{l}
\mathcal{H}_0: \; \hat{\boldsymbol{x}}_{\overline{\Delta}} = \hat{\boldsymbol{n}}_{\overline{\Delta}}\\
\mathcal{H}_1: \; \hat{\boldsymbol{x}}_{\overline{\Delta}} =\mathbf{U}_{\mathcal{P}\overline{\Delta}}^{\top}\boldsymbol{s}+ \hat{\boldsymbol{n}}_{\overline{\Delta}}
\end{array}.
\end{equation}
With similar derivations, the Dirac subspace detector becomes:
\begin{equation} \label{eq:orth_energy_detector}
T(\hat{\boldsymbol{x}}_{\overline{\Delta}})=\|\hat{\boldsymbol{x}}_{\overline{\Delta}}\|_2^2  /\sigma^2\underset{\mathcal{H}_0}{\stackrel{\mathcal{H}_1}{\gtrless}} \gamma.
\end{equation}
Once again, we measure the SNR energy in the orthogonal subspace \(\operatorname{span}(\bbU_{\mathcal{P}\overline{\Delta}})\).

As in~\eqref{eq:detector_hodge}, the test statistic follows a Chi-square distribution, so the false alarm and detection probabilities match those in~\eqref{eq:pfa} and~\eqref{eq:pd}, respectively. In this case, under \(\ccalH_1\), the distribution’s non-centrality parameter is \(\delta = \|\hat{\boldsymbol{s}}_{\overline{\Delta}}\|_2^2 / \sigma^2\), where \(\hat{\boldsymbol{s}}_{\overline{\Delta}} = \mathbf{U}_{\mathcal{P}\overline{\Delta}}^{\top}\boldsymbol{s} \in \reals^{N_{\ccalP \overline{\Delta}}}\), and \(N_{\ccalP \overline{\Delta}}\) is the dimension of the Dirac complement subspace. Because a simplicial complex signal has a higher dimensionality than a \(k\)-simplicial signal (\(N > N_k\)), in the Dirac setting the energy of \(\hat{\boldsymbol{s}}_{\overline{\Delta}}\) is typically larger, leading to a higher non-centrality parameter and improved detection performance.

The detectors in~\eqref{eq:detector_hodge} and~\eqref{eq:orth_energy_detector} are energy detectors: they evaluate the signal energy in the orthogonal subspace, namely \(\operatorname{span}(\bbU_{\overline{\Delta}})\) versus \(\operatorname{span}(\bbU_{\mathcal{P}\overline{\Delta}})\). Given the probabilities of false alarm [cf.~\eqref{eq:pfa}] and detection [cf.~\eqref{eq:pd}], we can characterize the asymptotic behavior of~\eqref{eq:orth_energy_detector} following~\cite{kay1998fundamentals}, as stated next.

\begin{proposition}[Asymptotic performance] \label{prop:asymptotic}
    For a large dimension of complement subspace $N_{\ccalP \overline{\Delta}}$, the detection probability of the energy detector in~\eqref{eq:orth_energy_detector} is approximated by
\begin{equation}\label{eq:pd_energy}
\text{P}_{\text{D}} \approx Q\left(Q^{-1}\left(\text{P}_{\text{FA}}\right)-\sqrt{d^2}\right)
\end{equation}
where $Q(\cdot)$ is the right-tail probability function of the standard normal distribution and $d^2$ is the deflection coefficient defined as $d^2 = (\left\|\hat{\boldsymbol{s}}_{\overline{\Delta}}\right\|_2^2/\sigma^2)^2/2N_{\ccalP \overline{\Delta}}$.
\end{proposition}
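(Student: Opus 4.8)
The plan is to push the exact distributional characterization in~\eqref{eq:chi_sq} through a Gaussian approximation valid for a large complement dimension $n := N_{\ccalP \overline{\Delta}}$. Abbreviate the non-centrality parameter as $\delta := \|\hat{\boldsymbol{s}}_{\overline{\Delta}}\|_2^2/\sigma^2$. First I would record the moments of $T \equiv T(\hat{\boldsymbol{x}}_{\overline{\Delta}})$: under $\mathcal{H}_0$ it is central, $\mathbb{E}[T;\mathcal{H}_0] = n$ and $\operatorname{var}(T;\mathcal{H}_0) = 2n$, while under $\mathcal{H}_1$ it is noncentral, $\mathbb{E}[T;\mathcal{H}_1] = n + \delta$ and $\operatorname{var}(T;\mathcal{H}_1) = 2n + 4\delta$. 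Writing $T = \sum_{i=1}^{n} (Z_i + \mu_i)^2$ with $Z_i \sim \mathcal{N}(0,1)$ independent and $\sum_i \mu_i^2 \in \{0,\delta\}$, a Lyapunov central limit theorem applies (the summands have uniformly bounded moments as long as $\delta/n$ stays bounded), so that for large $n$ one has $T \approx \mathcal{N}(n, 2n)$ under $\mathcal{H}_0$ and $T \approx \mathcal{N}(n+\delta, 2n+4\delta)$ under $\mathcal{H}_1$. In the asymptotic regime I would additionally replace $2n + 4\delta$ by $2n$ -- legitimate when $\delta = o(n)$ -- so that both hypotheses are Gaussian with common scale $\sqrt{2n}$ and differ only by the mean shift $\delta$.

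Next I would invert the false-alarm constraint. From~\eqref{eq:pfa}, $\text{P}_{\mathrm{FA}} = \operatorname{Pr}\{T > \gamma; \mathcal{H}_0\} \approx Q\!\left( (\gamma - n)/\sqrt{2n} \right)$, hence $\gamma \approx n + \sqrt{2n}\, Q^{-1}(\text{P}_{\mathrm{FA}})$. Substituting this threshold into~\eqref{eq:pd} and using the $\mathcal{H}_1$ Gaussian approximation,
\[
\text{P}_{\text{D}} = \operatorname{Pr}\{T > \gamma; \mathcal{H}_1\} \approx Q\!\left( \frac{\gamma - (n+\delta)}{\sqrt{2n}} \right) = Q\!\left( Q^{-1}(\text{P}_{\mathrm{FA}}) - \frac{\delta}{\sqrt{2n}} \right).
\]
It then remains to identify $\delta/\sqrt{2n}$ with $\sqrt{d^2}$: since $\delta/\sqrt{2n} = (\|\hat{\boldsymbol{s}}_{\overline{\Delta}}\|_2^2/\sigma^2)/\sqrt{2N_{\ccalP \overline{\Delta}}}$, its square is exactly $d^2 = (\|\hat{\boldsymbol{s}}_{\overline{\Delta}}\|_2^2/\sigma^2)^2/(2N_{\ccalP \overline{\Delta}})$, which is also the deflection coefficient $(\mathbb{E}[T;\mathcal{H}_1] - \mathbb{E}[T;\mathcal{H}_0])^2/\operatorname{var}(T;\mathcal{H}_0)$. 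Plugging this back yields~\eqref{eq:pd_energy}.

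The part I expect to carry the actual content, rather than bookkeeping, is the Gaussian approximation of the \emph{noncentral} chi-square together with the variance replacement $2n + 4\delta \to 2n$: one must pin down the precise meaning of ``large $N_{\ccalP \overline{\Delta}}$'' and the admissible growth of $\delta$ relative to $N_{\ccalP \overline{\Delta}}$ for the approximation error in both $\text{P}_{\mathrm{FA}}$ and $\text{P}_{\text{D}}$ to vanish (a Berry--Esseen-type estimate would make this quantitative). Since this is precisely the classical large-sample analysis of energy detectors, I would cite the corresponding development in~\cite{kay1998fundamentals} rather than reprove it here; the remaining steps -- the moment computations, the inversion of $\text{P}_{\mathrm{FA}}$, and the algebraic identification of $d^2$ -- are routine.
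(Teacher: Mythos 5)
Your proposal is correct and follows essentially the same route as the paper's proof: compute the first two moments of the (non)central chi-square statistic, apply the Gaussian approximation for large $N_{\ccalP \overline{\Delta}}$, express $\text{P}_{\text{D}}$ as a function of $\text{P}_{\text{FA}}$, and drop the $\delta/N_{\ccalP \overline{\Delta}}$ terms (the paper does this via a first-order Taylor expansion of the denominator $\sqrt{1+2\delta/N_{\ccalP\overline{\Delta}}}$, which is the same step as your replacement $2n+4\delta \to 2n$). Your added remarks on the Lyapunov CLT and the admissible growth of $\delta$ make the argument slightly more explicit but do not change its substance.
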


\begin{proof}
    See Appendix~\ref{sub:proof_prop_asymptotic}.
\end{proof}

Equation~\eqref{eq:pd_energy} shows that the detection probability rises with the deflection coefficient $d^2$, given that the $Q$ function is monotonically decreasing. In turn, $d^2$ depends on: (i) the SNR of the projection of the signal onto the orthogonal subspace $\|\hat{\boldsymbol{s}}_{\overline{\Delta}}\|_2^2 / \sigma^2$; and (ii) the dimensionality of the orthogonal subspace $N_{\ccalP \overline{\Delta}}$. Consequently, the higher the projection energy under hypothesis $\mathcal{H}_1$, the greater the detection probability. Finally, note that the asymptotic performance in the Hodge scenario [c.f.~\eqref{eq:detector_hodge}] follows~\eqref{eq:pd_energy} by using $\hat{\boldsymbol{s}}_{\overline{\Delta}}^k$ instead of $\hat{\boldsymbol{s}}_{\overline{\Delta}}$ in the deflection coefficient.

\subsection{Connections Between Hodge and Dirac Detectors} \label{sub:connections_hodge_dirac}
Understanding how these two detectors relate enables us to exploit their connections and enhance the detection task. We summarize the connection in the following propositions.

\begin{proposition} \label{prop:hodge_zero}
    Let $\boldsymbol{s}^1$ be a 1-signal (edge signal).
    Also, let $\bbD$ denote the Dirac operator defined in~\eqref{eq:dirac}, whose decomposition as in~\eqref{eq:dirac_decomp} is $\bbD_l$ and $\bbD_u$. Finally, let $\bbB_1$, $\bbB_2$ be the node-to-edge and edge-to-triangle incidence matrices, respectively, and let $\bbL_1$ be the Laplacian matrix.
    Then, let $\boldsymbol{s} = \left[\boldsymbol{0}\|\boldsymbol{s}^1\| \boldsymbol{0}\right]$ be the corresponding simplicial complex signal. It holds that
    \begin{subequations} \label{eq:hodge_zero}
    \begin{align}
        \boldsymbol{s} &\in \operatorname{span}\left(\mathbf{D}_l\right) \Leftrightarrow \boldsymbol{s}^1 \in \operatorname{span}\left(\mathbf{B}_1^\top\right) \label{eq:hodge_zero1} \\
        \boldsymbol{s} &\in \operatorname{span}\left(\mathbf{D}_u\right) \Leftrightarrow \boldsymbol{s}^1 \in \operatorname{span}\left(\mathbf{B}_2\right) \label{eq:hodge_zero2} \\
        \boldsymbol{s} &\in \operatorname{kernel}\left(\mathbf{D}\right) \Leftrightarrow \boldsymbol{s}^1 \in \operatorname{kernel}\left(\mathbf{L}_1\right) \label{eq:hodge_zero3}
    \end{align}
    \end{subequations}
    where $\Leftrightarrow$ denotes necessary and sufficient conditions.
\end{proposition}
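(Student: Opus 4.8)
The plan is to work directly from the explicit block forms of $\mathbf{D}_l$, $\mathbf{D}_u$ and $\mathbf{D}=\mathbf{D}_l+\mathbf{D}_u$ in the $K=2$ setting of~\eqref{eq:dirac_decomp}, and to match the embedded vector $\boldsymbol{s}=[\boldsymbol{0}\,\|\,\boldsymbol{s}^1\,\|\,\boldsymbol{0}]$ block by block against a generic element of each range, respectively the kernel. All three equivalences then collapse to elementary statements about the incidence matrices $\mathbf{B}_1$ and $\mathbf{B}_2$.

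For~\eqref{eq:hodge_zero1}, a generic element of $\operatorname{span}(\mathbf{D}_l)$ is $\mathbf{D}_l[\mathbf{a}\,\|\,\mathbf{b}\,\|\,\mathbf{c}]=[\,\mathbf{B}_1\mathbf{b}\,\|\,\mathbf{B}_1^\top\mathbf{a}\,\|\,\boldsymbol{0}\,]$. Setting this equal to $[\boldsymbol{0}\,\|\,\boldsymbol{s}^1\,\|\,\boldsymbol{0}]$ leaves the two conditions $\mathbf{B}_1\mathbf{b}=\boldsymbol{0}$ and $\mathbf{B}_1^\top\mathbf{a}=\boldsymbol{s}^1$; the first is always met by $\mathbf{b}=\boldsymbol{0}$, so the embedded signal lies in $\operatorname{span}(\mathbf{D}_l)$ iff $\boldsymbol{s}^1\in\operatorname{span}(\mathbf{B}_1^\top)$. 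The argument for~\eqref{eq:hodge_zero2} is the mirror image: a generic element of $\operatorname{span}(\mathbf{D}_u)$ is $[\,\boldsymbol{0}\,\|\,\mathbf{B}_2\mathbf{c}\,\|\,\mathbf{B}_2^\top\mathbf{b}\,]$, and matching against $[\boldsymbol{0}\,\|\,\boldsymbol{s}^1\,\|\,\boldsymbol{0}]$ again leaves only the nontrivial requirement $\boldsymbol{s}^1\in\operatorname{span}(\mathbf{B}_2)$, the trailing constraint $\mathbf{B}_2^\top\mathbf{b}=\boldsymbol{0}$ being vacuous.

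For~\eqref{eq:hodge_zero3}, I would compute $\mathbf{D}\boldsymbol{s}=[\,\mathbf{B}_1\boldsymbol{s}^1\,\|\,\boldsymbol{0}\,\|\,\mathbf{B}_2^\top\boldsymbol{s}^1\,]$, so $\boldsymbol{s}\in\operatorname{kernel}(\mathbf{D})$ iff $\mathbf{B}_1\boldsymbol{s}^1=\boldsymbol{0}$ and $\mathbf{B}_2^\top\boldsymbol{s}^1=\boldsymbol{0}$. It then remains to identify $\operatorname{kernel}(\mathbf{L}_1)$ with $\operatorname{kernel}(\mathbf{B}_1)\cap\operatorname{kernel}(\mathbf{B}_2^\top)$. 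Since $\mathbf{L}_1=\mathbf{B}_1^\top\mathbf{B}_1+\mathbf{B}_2\mathbf{B}_2^\top$ [cf.~\eqref{eq:hodge_lap}], the inclusion $\operatorname{kernel}(\mathbf{B}_1)\cap\operatorname{kernel}(\mathbf{B}_2^\top)\subseteq\operatorname{kernel}(\mathbf{L}_1)$ is immediate, and conversely, if $\mathbf{L}_1\boldsymbol{s}^1=\boldsymbol{0}$, pairing with $\boldsymbol{s}^1$ gives $\|\mathbf{B}_1\boldsymbol{s}^1\|_2^2+\|\mathbf{B}_2^\top\boldsymbol{s}^1\|_2^2=0$, forcing both terms to vanish. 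An alternative, even shorter route invokes $\operatorname{kernel}(\mathbf{D})=\operatorname{kernel}(\mathbf{D}^2)=\operatorname{kernel}(\mathcal{L})$, valid because $\mathbf{D}$ is symmetric, together with $\mathcal{L}=\operatorname{blkdiag}(\mathbf{L}_0,\mathbf{L}_1,\mathbf{L}_2)$, which isolates the middle block and yields~\eqref{eq:hodge_zero3} at once.

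All steps are routine linear algebra, so I do not foresee a genuine obstacle; the only points needing a moment's care are to observe that the auxiliary block equations appearing in the range computations ($\mathbf{B}_1\mathbf{b}=\boldsymbol{0}$ in~\eqref{eq:hodge_zero1} and $\mathbf{B}_2^\top\mathbf{b}=\boldsymbol{0}$ in~\eqref{eq:hodge_zero2}) never obstruct solvability, and to exploit positive semidefiniteness of $\mathbf{L}_1$ for the kernel case rather than attempting a direct algebraic manipulation.
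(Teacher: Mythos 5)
Your proof is correct, and it is exactly the direct block-computation that the paper alludes to when it says the proof ``follows directly from the definitions of the Dirac operator and the incidence and Hodge Laplacian matrices'' (the paper omits the details for space, but its Appendix proof of the companion Proposition~3 uses the same technique of multiplying out the block matrices and reading off the conditions). Your handling of the one genuinely nontrivial point --- that the auxiliary constraints $\mathbf{B}_1\mathbf{b}=\boldsymbol{0}$ and $\mathbf{B}_2^\top\mathbf{b}=\boldsymbol{0}$ are always satisfiable, and that $\operatorname{kernel}(\mathbf{L}_1)=\operatorname{kernel}(\mathbf{B}_1)\cap\operatorname{kernel}(\mathbf{B}_2^\top)$ via positive semidefiniteness (or equivalently via $\operatorname{kernel}(\mathbf{D})=\operatorname{kernel}(\mathbf{D}^2)$) --- is precisely right.
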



While the proof for Proposition~\ref{prop:hodge_zero} is omitted due to space limitations, it follows directly from the definitions of the Dirac operator and the incidence and Hodge Laplacian matrices.
This result indicates that detecting whether an edge signal $\boldsymbol{s}^1$ belongs to a particular Hodge subspace (or some subset thereof) is equivalent to detecting whether the simplicial complex signal whose node and triangle signals are zero-padded, $\boldsymbol{s} = \left[\boldsymbol{0}\|\boldsymbol{s}^1\| \boldsymbol{0}\right]$, lies in the corresponding Dirac subspace.

\begin{proposition} \label{prop:hodge_dirac}
    Let $\boldsymbol{s}^0$, $\boldsymbol{s}^1$, and $\boldsymbol{s}^2$ represent a 0-signal (node signal), 1-signal (edge signal), and 2-signal (triangle signal), respectively, and let $\boldsymbol{s} = \left[\boldsymbol{s}^0\|\boldsymbol{s}^1\| \boldsymbol{s}^2\right]$ be the corresponding simplicial complex signal.
    Also, let $\bbD$ denote the Dirac operator defined in~\eqref{eq:dirac}, whose decomposition as in~\eqref{eq:dirac_decomp} is $\bbD_l$ and $\bbD_u$. Finally, let $\bbB_1$, $\bbB_2$ be the node-to-edge and edge-to-triangle incidence matrices, respectively, and let $\bbL_1$ be the Laplacian matrix.
    Then, it holds that
    \begin{subequations} \label{eq:dirac_to_hodge}
    \begin{align}
    \boldsymbol{s}&\in \operatorname{span}\left(\mathbf{D}_l\right) \Rightarrow \boldsymbol{s}^1\in \operatorname{span}\left(\mathbf{B}_1^\top\right) \label{eq:dirac_to_hodge1}\\
    \boldsymbol{s}&\in \operatorname{span}\left(\mathbf{D}_u\right) \Rightarrow \boldsymbol{s}^1\in \operatorname{span}\left(\mathbf{B}_2\right) \label{eq:dirac_to_hodge2}\\
    \boldsymbol{s}&\in \operatorname{kernel}\left(\mathbf{D}\right) \Rightarrow \boldsymbol{s}^1\in \operatorname{kernel}\left(\mathbf{L}_1\right) \label{eq:dirac_to_hodge3}
    \end{align}
    \end{subequations}
    where $\Rightarrow$ denotes sufficient conditions.
\end{proposition}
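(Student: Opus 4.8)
The plan is to unpack each membership condition on $\boldsymbol{s}$ by a direct block computation, using the explicit forms of $\mathbf{D}_l$ and $\mathbf{D}_u$ in~\eqref{eq:dirac_decomp} and of $\mathbf{D}$ in~\eqref{eq:dirac} for $K=2$, and then to read off the middle (edge) block. Throughout I write a generic vector of $\mathbb{R}^N$ as $\boldsymbol{v}=[\boldsymbol{v}^0\|\boldsymbol{v}^1\|\boldsymbol{v}^2]$ conformally with $\boldsymbol{s}=[\boldsymbol{s}^0\|\boldsymbol{s}^1\|\boldsymbol{s}^2]$.

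For~\eqref{eq:dirac_to_hodge1}: assuming $\boldsymbol{s}\in\operatorname{span}(\mathbf{D}_l)$ there is a $\boldsymbol{v}$ with $\boldsymbol{s}=\mathbf{D}_l\boldsymbol{v}$, and carrying out the product gives the three block identities $\boldsymbol{s}^0=\mathbf{B}_1\boldsymbol{v}^1$, $\boldsymbol{s}^1=\mathbf{B}_1^\top\boldsymbol{v}^0$, and $\boldsymbol{s}^2=\mathbf{0}$; the middle identity already exhibits $\boldsymbol{s}^1=\mathbf{B}_1^\top\boldsymbol{v}^0\in\operatorname{span}(\mathbf{B}_1^\top)$. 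The argument for~\eqref{eq:dirac_to_hodge2} is the same with $\mathbf{D}_u$ in place of $\mathbf{D}_l$: writing $\boldsymbol{s}=\mathbf{D}_u\boldsymbol{v}$ yields $\boldsymbol{s}^0=\mathbf{0}$, $\boldsymbol{s}^1=\mathbf{B}_2\boldsymbol{v}^2\in\operatorname{span}(\mathbf{B}_2)$, and $\boldsymbol{s}^2=\mathbf{B}_2^\top\boldsymbol{v}^1$. For~\eqref{eq:dirac_to_hodge3}: since $\mathbf{D}$ is symmetric, $\operatorname{kernel}(\mathbf{D})=\operatorname{kernel}(\mathbf{D}^2)=\operatorname{kernel}(\mathcal{L})$, and because $\mathcal{L}=\operatorname{blkdiag}(\mathbf{L}_0,\mathbf{L}_1,\mathbf{L}_2)$ is block diagonal, $\boldsymbol{s}\in\operatorname{kernel}(\mathbf{D})$ forces $\mathbf{L}_k\boldsymbol{s}^k=\mathbf{0}$ for every $k$, in particular $\mathbf{L}_1\boldsymbol{s}^1=\mathbf{0}$, i.e. $\boldsymbol{s}^1\in\operatorname{kernel}(\mathbf{L}_1)$. (A proof not invoking $\mathbf{D}^2=\mathcal{L}$ is equally short: $\mathbf{D}\boldsymbol{s}=\mathbf{0}$ gives $\mathbf{B}_1\boldsymbol{s}^1=\mathbf{0}$ from the first block row and $\mathbf{B}_2^\top\boldsymbol{s}^1=\mathbf{0}$ from the third, whence $\mathbf{L}_1\boldsymbol{s}^1=\mathbf{B}_1^\top(\mathbf{B}_1\boldsymbol{s}^1)+\mathbf{B}_2(\mathbf{B}_2^\top\boldsymbol{s}^1)=\mathbf{0}$.)

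There is no genuine obstacle here; each claim is a one-line block-matrix manipulation, so the write-up is essentially bookkeeping. The only point worth an explicit remark is why the implications are one-directional, which is what distinguishes this statement from Proposition~\ref{prop:hodge_zero}: here $\boldsymbol{s}^0$ and $\boldsymbol{s}^2$ are arbitrary, so knowing $\boldsymbol{s}^1\in\operatorname{span}(\mathbf{B}_1^\top)$ does not certify the existence of a preimage $\boldsymbol{v}$ that simultaneously reproduces $\boldsymbol{s}^0$ and $\boldsymbol{s}^2$ through $\mathbf{D}_l$ (and analogously for $\mathbf{D}_u$ and for $\mathbf{D}$). When the node and triangle blocks are pinned to zero, as in Proposition~\ref{prop:hodge_zero}, the complementary blocks of $\boldsymbol{v}$ can be set to zero as well and the converse goes through, recovering the "$\Leftrightarrow$" there.
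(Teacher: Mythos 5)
Your proposal is correct and follows essentially the same route as the paper: a direct block computation of $\mathbf{D}_l\boldsymbol{v}$ and $\mathbf{D}_u\boldsymbol{v}$ for the first two implications, and passage to $\mathbf{D}^2=\mathcal{L}=\operatorname{blkdiag}(\mathbf{L}_0,\mathbf{L}_1,\mathbf{L}_2)$ for the third (the paper simply left-multiplies $\mathbf{D}\boldsymbol{s}=\mathbf{0}$ by $\mathbf{D}$, which is the inclusion $\operatorname{kernel}(\mathbf{D})\subseteq\operatorname{kernel}(\mathbf{D}^2)$ that your argument also uses; the reverse inclusion from symmetry is not needed). Your parenthetical alternative for~\eqref{eq:dirac_to_hodge3} and the closing remark on why the implications are only one-directional are correct additions but do not change the substance.
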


\begin{proof}
    See Appendix~\ref{sub:proof_prop_nonzeropad}.
\end{proof}

Proposition~\ref{prop:hodge_dirac} indicates that simplicial signals of different orders can help detect the edge signal more effectively. To provide deeper insight, we consider the following simplified scenario. First, examine the Hodge setting under \(\ccalH_0\). The expected value of the test statistic in~\eqref{eq:detector_hodge} (assuming \(\sigma^2\) is absorbed into \(\gamma\)) is
\begin{equation}
\mathbb{E}[\|\hat{\boldsymbol{x}}^k\|_2^2] = \mathbb{E}[\|\hat{\boldsymbol{n}}_{\overline{\Delta}}\|_2^2] = N_{\overline{\Delta}} \sigma^2.
\end{equation}
In the Dirac setting, the expected value of the test statistic in~\eqref{eq:orth_energy_detector} is \(N_{\ccalP \overline{\Delta}} \sigma^2\). Conversely, under \(\ccalH_1\), we have
\begin{equation}
\mathbb{E}[\|\hat{\boldsymbol{x}}^k\|_2^2] = \mathbb{E}[\|\mathbf{U}_{\overline{\Delta}}^\top \boldsymbol{s}^k + \hat{\boldsymbol{n}}_{\overline{\Delta}}\|_2^2]
= \mathbb{E}[\|\mathbf{U}_{\overline{\Delta}}^\top \boldsymbol{s}^k\|_2^2] + N_{\overline{\Delta}} \sigma^2,
\end{equation}
If we assume the signal energy is proportional to its dimensionality, i.e., \(\mathbb{E}[\|\mathbf{U}_{\overline{\Delta}}^\top \boldsymbol{s}^k\|_2^2] = N_{\overline{\Delta}} \eta\), where \(\eta\) is a constant, then
\begin{equation}
\mathbb{E}[\|\hat{\boldsymbol{x}}^k\|_2^2] = N_{\overline{\Delta}} (\eta + \sigma^2).
\end{equation}
Assuming that \(\eta\) remains the same in the Dirac setting, it follows that
\(\mathbb{E}[\|\hat{\boldsymbol{x}}\|_2^2] = N_{\ccalP \overline{\Delta}} (\eta + \sigma^2)\).

When comparing the test statistic to a threshold, a useful measure of performance is the expected difference between the test statistic under \(\ccalH_1\) and \(\ccalH_0\). A larger difference implies an easier detection. In the Hodge case, this difference is
\begin{equation}
    \underbrace{N_{\overline{\Delta}} (\eta + \sigma^2)}_{\ccalH_1} - \underbrace{N_{\overline{\Delta}} \sigma^2}_{\ccalH_0} = N_{\overline{\Delta}} \eta,
\end{equation}
whereas in the Dirac case, the difference is \(N_{\ccalP \overline{\Delta}} \eta\). Since \(N_{\ccalP \overline{\Delta}} \geq N_{\overline{\Delta}}\), assuming identical noise power and signal energy under \(\ccalH_1\), the expected difference in the Dirac setting is larger, thereby facilitating detection. 


\section{Detection with missing data} \label{S:missing}

In the presence of missing values, the previous detectors do not hold because it is unclear whether the observed signal resides in the subspace of interest. In this section, we discuss the topological matched subspace detector for incomplete signals. For simplicity, we will focus on the Dirac subspace detection problem, as extending it to the Hodge setting is straightforward.

More formally, we have access only to a subset of entries selected by the sampling matrix $\bbTheta \neq \bbI$. The observed signal is defined as $\boldsymbol{x} = \boldsymbol{\Theta}(\boldsymbol{s}+\boldsymbol{n})\in \mathbb{R}^{N_o}$. The hypothesis testing problem can be reformulated as
\begin{equation}\label{eq:hyp_test_missing}
\begin{array}{l}
\mathcal{H}_0: \; \boldsymbol{x}= \mathbf{U}_{\mathcal{P}\Delta \Theta}\hat{\boldsymbol{s}}_{0}+\boldsymbol{n}_{\Theta}\\
\mathcal{H}_1: \; \boldsymbol{x}= \mathbf{U}_{\mathcal{P}\Theta}\hat{\boldsymbol{s}}_{1}+\boldsymbol{n}_{\Theta}
\end{array}
\end{equation}
where $\mathbf{U}_{\mathcal{P}\Delta \Theta} = \boldsymbol{\Theta} \mathbf{U}_{\mathcal{P}\Delta} \in \reals^{N_o \times N_{\ccalP {\Delta}}}$, $\mathbf{U}_{\mathcal{P}\Theta} = \boldsymbol{\Theta} \mathbf{U_\mathcal{P}}\in \reals^{N_o \times N}$ (i.e., the rows of the eigenvector matrices corresponding to the elements chosen by $\bbTheta$), $\hat{\boldsymbol{s}}_{0}$ and $\hat{\boldsymbol{s}}_{1}$ are the coefficients corresponding to the eigenvectors in $\mathbf{U}_{\mathcal{P}\Delta}$ and $\mathbf{U_\mathcal{P}}$, respectively, that construct the signal of interest. As before, we consider Gaussian noise $\boldsymbol{n}_{\Theta}=\boldsymbol{\Theta}\boldsymbol{n}\sim \mathcal{N}\bigl(\boldsymbol{0}, \sigma^2 \mathbf{I}_{N_o}\bigr)$.

Note that projecting onto the orthogonal subspace $\operatorname{span}(\bbU_{\mathcal{P}\overline{\Delta}})$ is not feasible, as $\mathbf{U}_{\mathcal{P}\overline{\Delta} \Theta}^\top \mathbf{U}_{\mathcal{P}\Delta \Theta} = \mathbf{U}_{\mathcal{P}\overline{\Delta}}^\top \boldsymbol{\Theta}^\top \boldsymbol{\Theta} \mathbf{U}_{\mathcal{P}\Delta} \neq \bbzero_N$, where $\bbzero_N$ is the $N \times N$ all-zero matrix.
We therefore formulate the GLRT by considering the distribution of $\boldsymbol{x}$ under each hypothesis, and by using the MLE of $\hat{\boldsymbol{s}}_j$, $j \in \{0,1\}$. The distribution of $\boldsymbol{x}$ under $\mathcal{H}_j$ is $\boldsymbol{x}\sim \mathcal{N}\bigl(\mathbf{U}_{\mathcal{P}\Theta,j}\hat{\boldsymbol{s}}_{j}^{*}, \sigma^2 \mathbf{I}_{N_o}\bigr)$, where $\mathbf{U}_{\mathcal{P}\Theta,0} = \mathbf{U}_{\mathcal{P}\Delta \Theta}$ under $\mathcal{H}_0$ and $\mathbf{U}_{\mathcal{P}\Theta,1} = \mathbf{U}_{\mathcal{P}\Theta}$ under $\mathcal{H}_1$. Hence, the detector becomes
\begin{equation} \label{eq:test_missing}
T(\boldsymbol{x}) = \frac{\|\boldsymbol{x}-\mathbf{U}_{\mathcal{P}\Delta \Theta}\hat{\boldsymbol{s}}_{0}^{*}\|_2^2-\|\boldsymbol{x}-\mathbf{U}_{\mathcal{P}\Theta}\hat{\boldsymbol{s}}_{1}^{*}\|_2^2}{\sigma^2}\underset{\mathcal{H}_0}{\stackrel{\mathcal{H}_1}{\gtrless}} \gamma.
\end{equation}
This detector measures the difference between the residual energies of the signals in the respective subspaces for each hypothesis. Specifically, the numerator in~\eqref{eq:test_missing} is the difference between two distinct terms, each capturing the energy of the discrepancy between the observed signal and its reconstruction via the eigenvectors of each hypothesis. Consequently, the difference between the missing-data detector~\eqref{eq:test_missing} and the Dirac subspace detector~\eqref{eq:orth_energy_detector} is that~\eqref{eq:test_missing} does not explicitly include the complement subspace but instead considers the subspace of interest for each hypothesis.

Finally, the observed signal also affects the MLE of $\hat{\bbs}_j$ for $j \in \{0,1\}$. This MLE depends on the relationship between the number of observed samples $N_o$ and the dimension of the subspace $N_{\ccalP {\Delta}}$. If $N_o > N_{\ccalP {\Delta}}$, we are in the overdetermined case; if $N_o \leq N_{\ccalP {\Delta}}$, we are in the underdetermined case. These two scenarios are detailed in the following sections.

\subsection{Overdetermined Case}

For the overdetermined case, we have $N_o > N_{\ccalP {\Delta}}$, and thus we find the MLE of $\hat{\boldsymbol{s}}_{j}$ by solving
\begin{flalign} \label{eq:mle_overdet}
\begin{array}{l} \hat{\boldsymbol{s}}_{j}^{*} = \underset{\hat{\boldsymbol{s}}_{j} }{\operatorname{argmin}} \;\|\boldsymbol{x}-\mathbf{U}_{\mathcal{P}\Theta,j}\hat{\boldsymbol{s}}_{j}\|_2^2.
\end{array}
\end{flalign}
Next, we substitute this value into the test statistic in~\eqref{eq:test_missing}, but before doing so, it is necessary to analyze the solution of this problem under both $\mathcal{H}_0$ and $\mathcal{H}_1$.

For the null hypothesis $\mathcal{H}_0$, the number of observations satisfies $N_o > N_{\ccalP {\Delta}}$, so $\mathbf{U}_{\mathcal{P}\Delta \Theta} = \boldsymbol{\Theta} \mathbf{U}_{\mathcal{P}\Delta} \in \reals^{N_o \times N_\Delta}$ is a tall matrix. The MLE of $\hat{\bbs}_0$ is thus given by the left pseudoinverse: $\hat{\boldsymbol{s}}_{0}^{*} = (\mathbf{U}_{\mathcal{P}\Delta \Theta})^\dagger \bbx_\Theta$. Since $\mathbf{U}_{\mathcal{P}\Delta \Theta} (\mathbf{U}_{\mathcal{P}\Delta \Theta})^{\dagger} \neq \bbI_{N_o}$, we have $\|\boldsymbol{x}-\mathbf{U}_{\mathcal{P}\Delta \Theta}\hat{\boldsymbol{s}}_{0}^{*}\|_2^2 \neq 0$.

Under the alternative hypothesis $\mathcal{H}_1$, $\mathbf{U}_{\mathcal{P}\Theta} = \boldsymbol{\Theta} \mathbf{U_\mathcal{P}} \in \reals^{N_o \times N}$ is a full row-rank, fat matrix, as it is formed by choosing $N_o \leq N$ rows from the full-rank $N \times N$ matrix $\mathbf{U_\mathcal{P}}$. One of the infinitely many solutions is obtained via the right pseudoinverse of $\mathbf{U}_{\mathcal{P}\Theta}$, yielding $\hat{\bbs}^*_1 = (\mathbf{U}_{\mathcal{P}\Theta})^{\dagger}\boldsymbol{x}$. Note that, because in this case $\mathbf{U}_{\mathcal{P}\Theta} (\mathbf{U}_{\mathcal{P}\Theta})^\dagger = \mathbf{I}$, it follows that $\|\bbx - \mathbf{U}_{\mathcal{P}\Theta} (\mathbf{U}_{\mathcal{P}\Theta})^\dagger \bbx\|_2^2 = 0$.

Substituting the estimates $\hat{\bbs}^*_0$ and $\hat{\bbs}^*_1$ back into~\eqref{eq:test_missing} results in the simplified detector
\begin{equation} \label{eq:detector_overdet_missing}
T(\boldsymbol{x}) = \frac{\|\boldsymbol{x}-\mathbf{U}_{\mathcal{P}\Delta \Theta}(\mathbf{U}_{\mathcal{P}\Delta \Theta})^\dagger\boldsymbol{x}\|_2^2}{\sigma^2}
\underset{\mathcal{H}_0}{\stackrel{\mathcal{H}_1}{\gtrless}} \gamma.
\end{equation}
Here, the matrix $\boldsymbol{P}_{\Delta \Theta} = \mathbf{U}_{\mathcal{P}\Delta \Theta}(\mathbf{U}_{\mathcal{P}\Delta \Theta})^\dagger$ is a projection operator onto the column space $\operatorname{span}(\mathbf{U}_{\mathcal{P}\Delta \Theta})$. Consequently, the proposed test statistic $T(\boldsymbol{x})$ measures the difference between $\boldsymbol{x}$ and its projection onto $\operatorname{span}(\mathbf{U}_{\mathcal{P}\Delta \Theta})$.

The transformed variable $ \boldsymbol{x} - \boldsymbol{P}_{\Delta \Theta} \boldsymbol{x} = (\bbI - \boldsymbol{P}_{\Delta \Theta}) \boldsymbol{x} = \boldsymbol{P}_{\overline{\Delta} \Theta} \boldsymbol{x} $ is the projection of $\boldsymbol{x}$ onto the orthogonal subspace of $\operatorname{span}(\bbU_{\ccalP\Delta \Theta})$\footnote{In a slight abuse of notation, we let $\boldsymbol{P}_{\overline{\Delta} \Theta} = \bbI - \boldsymbol{P}_{\Delta \Theta}$ denote the projection operator onto the orthogonal subspace of $\operatorname{span}(\bbU_{\ccalP\Delta \Theta})$, even though it is not strictly a projection onto $\operatorname{span}(\bbU_{\ccalP \overline{\Delta} \Theta})$. Its role depends on the entries chosen by $\bbTheta$. The two subspaces (orthogonal to $\operatorname{span}(\bbU_{\ccalP\Delta \Theta})$ and $\operatorname{span}(\bbU_{\ccalP \overline{\Delta} \Theta})$) coincide only when no data is missing, i.e., $\bbTheta=\bbI$.}. This variable still follows a Gaussian distribution with covariance matrix $\sigma^2 \boldsymbol{P}_{\overline{\Delta} \Theta}^{\top} \boldsymbol{P}_{\overline{\Delta} \Theta} = \sigma^2 \boldsymbol{P}_{\overline{\Delta} \Theta}$ (since the projection matrix is symmetric and idempotent). Its norm follows a Chi-square distribution with $\operatorname{tr}(\boldsymbol{P}_{\overline{\Delta} \Theta}) = N - \operatorname{rank}(\bbU_{\ccalP\Delta \Theta})$ degrees of freedom~\cite{Olkin1992QuadraticFI}, where $\operatorname{tr}$ is the trace operator and $\operatorname{rank}$ returns the rank of the matrix. Using the fact that a projection matrix has exactly one eigenvalue per dimension of the subspace it projects onto (and zeros for the rest), if $\bbU_{\ccalP\Delta \Theta}$ is full column rank (i.e., $\operatorname{rank} (\bbU_{\ccalP\Delta \Theta}) = N_{\ccalP \Delta}$), the degrees of freedom of the Chi-square distribution become $N - N_{\ccalP {\Delta}} = N_{\ccalP \overline{\Delta}}$. The associated false alarm and detection probabilities are given by~\eqref{eq:pfa} and~\eqref{eq:pd}, respectively, with the non-centrality parameter under $\ccalH_1$ being $\delta = \| \boldsymbol{P}_{\overline{\Delta} \Theta} \mathbf{U}_{\mathcal{P}\Theta}\hat{\boldsymbol{s}}_{1} \|_2^2 / \sigma^2$.

\begin{remark}
When no data is missing, the sampling matrix $\boldsymbol{\Theta}$ is the identity. In this case, detector~\eqref{eq:detector_overdet_missing} simplifies to $T(\boldsymbol{x}) = (\|\boldsymbol{x}\|_2^2 - \|\mathbf{U}_{\mathcal{P}\Delta}^\top\boldsymbol{x}\|_2^2)/\sigma^2$, which is equivalent to detector~\eqref{eq:orth_energy_detector}. Indeed, $\|\boldsymbol{x}\|_2^2 - \|\mathbf{U}_{\mathcal{P}\Delta}^\top\boldsymbol{x}\|_2^2$ is the energy of the projection of the signal onto the complement subspace, $\|\mathbf{U}_{\mathcal{P}\overline{\Delta}}^\top\boldsymbol{x}\|_2^2$, as dictated by Parseval's theorem.
\end{remark}

\smallskip
\noindent\textbf{Connections to projection detectors.} The GLRT topological detector in~\eqref{eq:detector_overdet_missing} is equivalent to the projection detector proposed in~\cite[Section 5]{balzano2010high} when $\bbU_{\mathcal{P}\Delta \Theta}$ is a full column rank matrix, as stated in Proposition~\ref{prop:other_missing_analsis}. Under these conditions, we can adapt the results of~\cite{balzano2010high} to probabilistically characterize the performance of~\eqref{eq:detector_overdet_missing} in comparison to the scenario with no missing values. Let $\operatorname{span}\left(\mathbf{U}_{\mathcal{P}\Delta}\right)$ denote the subspace spanned by the columns of $\mathbf{U}_{\mathcal{P}\Delta}$ (of dimension $N_{\ccalP {\Delta}}$), and let $\operatorname{span}\left(\mathbf{U}_{\mathcal{P}\overline{\Delta}}\right)$ denote the orthogonal subspace spanned by the columns of $\mathbf{U}_{\mathcal{P}\overline{\Delta}}$ (of dimension $N_{\ccalP \overline{\Delta}}$).

\begin{proposition}\label{prop:other_missing_analsis}
    Let $\bbU_{\mathcal{P}\Delta \Theta} = \bbTheta \bbU_{\mathcal{P}\Delta}$ be the rows of the eigenvectors matrix $\bbU_{\mathcal{P}\Delta}$ selected by the sampling matrix $\bbTheta$. Also, let $\bbx$ be the elements of the sampled simplicial complex signal. Finally, let $\boldsymbol{P}_{\Delta \Theta} = \mathbf{U}_{\mathcal{P}\Delta \Theta}(\mathbf{U}_{\mathcal{P}\Delta \Theta}^{\top} \mathbf{U}_{\mathcal{P}\Delta \Theta})^\dagger \mathbf{U}_{\mathcal{P}\Delta \Theta}^{\top}$ be the projection operator onto the subspace $\operatorname{span}(\mathbf{U}_{\mathcal{P}\Delta \Theta})$. Assuming that $\bbU_{\mathcal{P}\Delta \Theta}$ is full column rank, the detector given in~\eqref{eq:detector_overdet_missing} is equivalent to the detector
    \begin{equation}\label{eq:other_missing_detector}
        T(\boldsymbol{x}) = \left\|\boldsymbol{x}-\boldsymbol{P}_{\Delta \Theta}\boldsymbol{x}\right\|_2^2\underset{\mathcal{H}_0}{\stackrel{\mathcal{H}_1}{\gtrless}} \gamma,
    \end{equation}
    proposed in~\cite[Section 5]{balzano2010high}.
\end{proposition}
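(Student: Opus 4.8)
The plan is to reduce the proposition to a single linear-algebra identity together with a trivial rescaling of the threshold. Rewriting the statistic in \eqref{eq:detector_overdet_missing} as $T(\boldsymbol{x}) = \|(\bbI - \mathbf{U}_{\mathcal{P}\Delta \Theta}(\mathbf{U}_{\mathcal{P}\Delta \Theta})^\dagger)\boldsymbol{x}\|_2^2/\sigma^2$, it suffices to establish (i) that the operator $\mathbf{U}_{\mathcal{P}\Delta \Theta}(\mathbf{U}_{\mathcal{P}\Delta \Theta})^\dagger$ coincides with the projector $\boldsymbol{P}_{\Delta \Theta} = \mathbf{U}_{\mathcal{P}\Delta \Theta}(\mathbf{U}_{\mathcal{P}\Delta \Theta}^{\top} \mathbf{U}_{\mathcal{P}\Delta \Theta})^\dagger \mathbf{U}_{\mathcal{P}\Delta \Theta}^{\top}$ stated in the proposition, and (ii) that dividing by the fixed positive constant $\sigma^2$ leaves the induced decision regions unchanged.

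For step (i) I would invoke the standard fact that when $\mathbf{U}_{\mathcal{P}\Delta \Theta}\in\reals^{N_o\times N_{\ccalP\Delta}}$ has full column rank $N_{\ccalP\Delta}$ --- which is precisely the hypothesis of the proposition and also forces $N_o\ge N_{\ccalP\Delta}$, consistent with the overdetermined regime --- the Gram matrix $\mathbf{U}_{\mathcal{P}\Delta \Theta}^{\top} \mathbf{U}_{\mathcal{P}\Delta \Theta}$ is invertible and the Moore--Penrose pseudoinverse reduces to the left inverse, $(\mathbf{U}_{\mathcal{P}\Delta \Theta})^\dagger = (\mathbf{U}_{\mathcal{P}\Delta \Theta}^{\top} \mathbf{U}_{\mathcal{P}\Delta \Theta})^{-1}\mathbf{U}_{\mathcal{P}\Delta \Theta}^{\top}$; in particular the $\dagger$ appearing in the definition of $\boldsymbol{P}_{\Delta \Theta}$ is then an ordinary inverse. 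Substituting gives $\mathbf{U}_{\mathcal{P}\Delta \Theta}(\mathbf{U}_{\mathcal{P}\Delta \Theta})^\dagger = \mathbf{U}_{\mathcal{P}\Delta \Theta}(\mathbf{U}_{\mathcal{P}\Delta \Theta}^{\top}\mathbf{U}_{\mathcal{P}\Delta \Theta})^{-1}\mathbf{U}_{\mathcal{P}\Delta \Theta}^{\top} = \boldsymbol{P}_{\Delta \Theta}$, the symmetric idempotent orthogonal projector onto $\operatorname{span}(\mathbf{U}_{\mathcal{P}\Delta \Theta})$, so the numerator of \eqref{eq:detector_overdet_missing} is exactly $\|\boldsymbol{x}-\boldsymbol{P}_{\Delta \Theta}\boldsymbol{x}\|_2^2$.

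For step (ii) I would simply note that, because $\sigma^2>0$, the test $T(\boldsymbol{x})/\sigma^2 > \gamma$ is the same event as $\|\boldsymbol{x}-\boldsymbol{P}_{\Delta \Theta}\boldsymbol{x}\|_2^2 > \sigma^2\gamma$; relabeling the threshold yields precisely \eqref{eq:other_missing_detector}, i.e., the projection detector of \cite[Section 5]{balzano2010high}. Hence the two detectors trace out the same ROC and are equivalent in the Neyman--Pearson sense. I do not expect any genuine obstacle: the entire argument is the pseudoinverse identity plus a threshold rescaling, and the one point that needs care is that full column rank of $\mathbf{U}_{\mathcal{P}\Delta \Theta}$ must be assumed explicitly, since the overdetermined condition $N_o>N_{\ccalP\Delta}$ by itself does not rule out the sampled rows being rank-deficient (in which case $\mathbf{U}_{\mathcal{P}\Delta \Theta}^{\top}\mathbf{U}_{\mathcal{P}\Delta \Theta}$ would be singular and $\boldsymbol{P}_{\Delta \Theta}$ would have to be interpreted through the general pseudoinverse formula).
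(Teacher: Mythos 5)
Your proposal is correct and follows essentially the same route as the paper's proof: both arguments rest on the identity $(\mathbf{U}_{\mathcal{P}\Delta \Theta})^\dagger = (\mathbf{U}_{\mathcal{P}\Delta \Theta}^{\top} \mathbf{U}_{\mathcal{P}\Delta \Theta})^{-1}\mathbf{U}_{\mathcal{P}\Delta \Theta}^{\top}$ for a full-column-rank matrix (so that $\mathbf{U}_{\mathcal{P}\Delta \Theta}(\mathbf{U}_{\mathcal{P}\Delta \Theta})^\dagger = \boldsymbol{P}_{\Delta \Theta}$), followed by absorbing the constant $\sigma^2$ into the threshold $\gamma$. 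Your additional remark that the overdetermined condition $N_o > N_{\mathcal{P}\Delta}$ alone does not guarantee full column rank of the sampled rows is a fair observation, but it does not change the argument since that rank condition is already an explicit hypothesis of the proposition.
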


\begin{proof}
    By using the definition of the left pseudoinverse for a full column rank matrix $\mathbf{A}^\dagger = (\mathbf{A}^\top \mathbf{A})^{-1} \mathbf{A}^\top$, we have that
    \begin{align*}
        \mathbf{U}_{\mathcal{P}\Delta \Theta}(\mathbf{U}_{\mathcal{P}\Delta \Theta})^\dagger &= \mathbf{U}_{\mathcal{P}\Delta \Theta} (\mathbf{U}_{\mathcal{P}\Delta \Theta}^\top\mathbf{U}_{\mathcal{P}\Delta \Theta})^{-1} \mathbf{U}_{\mathcal{P}\Delta \Theta}^\top \\
        & = \mathbf{U}_{\mathcal{P}\Delta \Theta} (\mathbf{U}_{\mathcal{P}\Delta \Theta}^\top\mathbf{U}_{\mathcal{P}\Delta \Theta})^\dagger \mathbf{U}_{\mathcal{P}\Delta \Theta}^\top = \boldsymbol{P}_{\Delta \Theta},
    \end{align*}
    where in the second equality we used the fact that, as $\mathbf{U}_{\mathcal{P}\Delta \Theta}^\top\mathbf{U}_{\mathcal{P}\Delta \Theta}$ is a full rank square matrix (owing to $\mathbf{U}_{\mathcal{P}\Delta \Theta}$ being full column rank), its pseudo-inverse and inverse coincide.
    Moreover, the noise power $\sigma^2$ in the denominator of~\eqref{eq:detector_overdet_missing} is absorbed into the threshold $\gamma$ on the right-hand side, making the two detectors equivalent.
\end{proof}

We are ready to probabilistically characterize the performance of the detector, but first we define the coherence of a subspace:

\begin{definition}[Subspace coherence~\cite{candes2012exact}]
The coherence of an $R$-dimensional subspace $\ccalS$ is defined as
\begin{equation}\label{eq:coherence}
\mu(\ccalS) := \frac{N}{R} \max_j \left\| \boldsymbol{P}_\ccalS \boldsymbol{e}_j \right\|_2^2,
\end{equation}
where $\boldsymbol{P}_\ccalS$ is the projection operator onto $\ccalS$, $\boldsymbol{e}_j$ is the $j$th standard basis element, and $N$ is the signal dimension.
\end{definition}

For a vector $\boldsymbol{v}$, $\mu(\boldsymbol{v})$ denotes the coherence of the subspace spanned by $\boldsymbol{v}$. We now claim the following about detector~\eqref{eq:detector_overdet_missing}.

\begin{corollary}\label{coro:other_analsis}
Define the decomposition of the signal $\boldsymbol{x}$ as $\boldsymbol{x} = \boldsymbol{x}_{\Delta} + \boldsymbol{x}_{\overline{\Delta}} \in \reals^{N}$, where $\boldsymbol{x}_{\Delta}\in \operatorname{span}\left(\mathbf{U}_{\mathcal{P}{\Delta}}\right)$ and $\boldsymbol{x}_{\overline{\Delta}}\in \operatorname{span}\left(\mathbf{U}_{\mathcal{P}\overline{\Delta}}\right)$. Let $\epsilon>0$ be a constant and assume $N_o \geq \frac{8}{3} N_{\ccalP {\Delta}} \mu(\operatorname{span}\left(\mathbf{U}_{\mathcal{P}{\Delta}}\right)) \log \left(\frac{2 N_{\ccalP {\Delta}}}{\epsilon}\right)$. Then, with probability at least $1 - 4\epsilon$,
\begin{equation} \label{eq:prop1}
\alpha \!\left\|\boldsymbol{x}\!-\!\boldsymbol{P}_{{\Delta}} \boldsymbol{x}\right\|_2^2 \!\leq \!\left\|\boldsymbol{x}\!-\!\boldsymbol{P}_{{\Delta}{\Theta}} \boldsymbol{x}\right\|_2^2
\!\leq\!(1\!+\!\beta) \frac{N_o}{N} \left\|\boldsymbol{x}\!-\!\boldsymbol{P}_{{\Delta}} \boldsymbol{x}\right\|_2^2
\vspace{0.4cm}
\end{equation}
where $\delta=\sqrt{\frac{8 N_{\ccalP {\Delta}} \mu(\operatorname{span}\left(\mathbf{U}_{\mathcal{P}{\Delta}}\right))}{3 N_o} \log \left(\frac{2 N_{\ccalP {\Delta}}}{\epsilon}\right)}$, $\gamma=\sqrt{2 \mu(\boldsymbol{x}_{\overline{\Delta}}) \log \left(\frac{1}{\epsilon}\right)}$, $\alpha=\frac{N_o (1-\beta)-N_{\ccalP {\Delta}} \mu(\operatorname{span}\left(\mathbf{U}_{\mathcal{P}{\Delta}}\right)) \frac{(1+\gamma)^2}{(1-\delta)}}{N}$, and $\beta=\sqrt{\frac{2 \mu(\boldsymbol{x}_{\overline{\Delta}})^2}{N_o} \log \left(\frac{1}{\epsilon}\right)}$.
\end{corollary}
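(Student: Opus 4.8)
The plan is to prove~\eqref{eq:prop1} by first rewriting both the middle term and the outer terms purely in terms of the out-of-subspace component $\boldsymbol{x}_{\overline{\Delta}}$, and then importing the missing-data matched-subspace concentration estimates of~\cite{balzano2010high} through the identification already established in Proposition~\ref{prop:other_missing_analsis}. Throughout write $\boldsymbol{x}_\Theta = \bbTheta\boldsymbol{x}\in\reals^{N_o}$ for the sampled signal (the vector inside the middle norm of~\eqref{eq:prop1}) and let $\Omega$ be the index set chosen by $\bbTheta$. First I would perform the reduction: since $\boldsymbol{x}_\Delta\in\operatorname{span}(\mathbf{U}_{\mathcal{P}\Delta})$ we may write $\boldsymbol{x}_\Delta=\mathbf{U}_{\mathcal{P}\Delta}\boldsymbol{c}$, hence $\bbTheta\boldsymbol{x}_\Delta=\mathbf{U}_{\mathcal{P}\Delta\Theta}\boldsymbol{c}\in\operatorname{span}(\mathbf{U}_{\mathcal{P}\Delta\Theta})$, so $(\bbI-\boldsymbol{P}_{\Delta\Theta})\bbTheta\boldsymbol{x}_\Delta=\boldsymbol{0}$ and therefore
\[
\|\boldsymbol{x}_\Theta-\boldsymbol{P}_{\Delta\Theta}\boldsymbol{x}_\Theta\|_2^2=\|(\bbI-\boldsymbol{P}_{\Delta\Theta})\bbTheta\boldsymbol{x}_{\overline{\Delta}}\|_2^2 .
\]
On the full ambient, $\boldsymbol{P}_\Delta\boldsymbol{x}=\boldsymbol{x}_\Delta$, so $\|\boldsymbol{x}-\boldsymbol{P}_\Delta\boldsymbol{x}\|_2^2=\|\boldsymbol{x}_{\overline{\Delta}}\|_2^2$. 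Thus~\eqref{eq:prop1} becomes a two-sided control of how subsampling distorts the energy of a vector lying in the orthogonal complement of the target subspace.

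For the upper bound I would note that $\bbI-\boldsymbol{P}_{\Delta\Theta}$ has operator norm one, so $\|(\bbI-\boldsymbol{P}_{\Delta\Theta})\bbTheta\boldsymbol{x}_{\overline{\Delta}}\|_2^2\le\|\bbTheta\boldsymbol{x}_{\overline{\Delta}}\|_2^2$, and then invoke the Chernoff/Bernstein bound for the norm of a uniformly subsampled vector, with deviation governed by the coherence $\mu(\boldsymbol{x}_{\overline{\Delta}})$ as in~\cite{balzano2010high}: with probability at least $1-2\epsilon$,
\[
(1-\beta)\tfrac{N_o}{N}\|\boldsymbol{x}_{\overline{\Delta}}\|_2^2\;\le\;\|\bbTheta\boldsymbol{x}_{\overline{\Delta}}\|_2^2\;\le\;(1+\beta)\tfrac{N_o}{N}\|\boldsymbol{x}_{\overline{\Delta}}\|_2^2 .
\]
Combined with $\|\boldsymbol{x}_{\overline{\Delta}}\|_2^2=\|\boldsymbol{x}-\boldsymbol{P}_\Delta\boldsymbol{x}\|_2^2$ this gives the right-hand inequality of~\eqref{eq:prop1}. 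For the lower bound I would decompose $\|(\bbI-\boldsymbol{P}_{\Delta\Theta})\bbTheta\boldsymbol{x}_{\overline{\Delta}}\|_2^2=\|\bbTheta\boldsymbol{x}_{\overline{\Delta}}\|_2^2-\|\boldsymbol{P}_{\Delta\Theta}\bbTheta\boldsymbol{x}_{\overline{\Delta}}\|_2^2$, lower-bound the first term by $(1-\beta)\tfrac{N_o}{N}\|\boldsymbol{x}_{\overline{\Delta}}\|_2^2$ as above, and, writing $\boldsymbol{P}_{\Delta\Theta}=\mathbf{U}_{\mathcal{P}\Delta\Theta}(\mathbf{U}_{\mathcal{P}\Delta\Theta}^\top\mathbf{U}_{\mathcal{P}\Delta\Theta})^{-1}\mathbf{U}_{\mathcal{P}\Delta\Theta}^\top$ (the inverse exists on the matrix-Chernoff event below), use
\[
\|\boldsymbol{P}_{\Delta\Theta}\bbTheta\boldsymbol{x}_{\overline{\Delta}}\|_2^2\;\le\;\frac{\|\mathbf{U}_{\mathcal{P}\Delta\Theta}^\top\bbTheta\boldsymbol{x}_{\overline{\Delta}}\|_2^2}{\lambda_{\min}(\mathbf{U}_{\mathcal{P}\Delta\Theta}^\top\mathbf{U}_{\mathcal{P}\Delta\Theta})} .
\]

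The two remaining estimates are the technical heart. The denominator is handled by a matrix Chernoff inequality applied to $\mathbf{U}_{\mathcal{P}\Delta\Theta}^\top\mathbf{U}_{\mathcal{P}\Delta\Theta}=\sum_{i\in\Omega}(\mathbf{U}_{\mathcal{P}\Delta})_{i,:}^\top(\mathbf{U}_{\mathcal{P}\Delta})_{i,:}$, a sum of rank-one PSD matrices each of spectral norm at most $\tfrac{N_{\ccalP\Delta}}{N}\mu(\operatorname{span}(\mathbf{U}_{\mathcal{P}\Delta}))$ (by the coherence definition) and with expectation $\tfrac{N_o}{N}\bbI_{N_{\ccalP\Delta}}$ (orthonormal columns of $\mathbf{U}_{\mathcal{P}\Delta}$); this yields $\lambda_{\min}(\mathbf{U}_{\mathcal{P}\Delta\Theta}^\top\mathbf{U}_{\mathcal{P}\Delta\Theta})\ge(1-\delta)\tfrac{N_o}{N}$ with probability at least $1-\epsilon$, and it is exactly here that the sample-size hypothesis $N_o\ge\tfrac{8}{3}N_{\ccalP\Delta}\mu(\operatorname{span}(\mathbf{U}_{\mathcal{P}\Delta}))\log(2N_{\ccalP\Delta}/\epsilon)$ is needed, to guarantee $\delta<1$. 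The numerator is the sampled cross term $\mathbf{U}_{\mathcal{P}\Delta\Theta}^\top\bbTheta\boldsymbol{x}_{\overline{\Delta}}=\sum_{i\in\Omega}(\mathbf{U}_{\mathcal{P}\Delta})_{i,:}^\top(\boldsymbol{x}_{\overline{\Delta}})_i$, which has zero population mean because $\mathbf{U}_{\mathcal{P}\Delta}^\top\boldsymbol{x}_{\overline{\Delta}}=\boldsymbol{0}$; a vector Bernstein bound gives $\|\mathbf{U}_{\mathcal{P}\Delta\Theta}^\top\bbTheta\boldsymbol{x}_{\overline{\Delta}}\|_2^2\le\tfrac{N_o(1+\gamma)^2}{N^2}N_{\ccalP\Delta}\mu(\operatorname{span}(\mathbf{U}_{\mathcal{P}\Delta}))\|\boldsymbol{x}_{\overline{\Delta}}\|_2^2$ with probability at least $1-\epsilon$. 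Substituting these two estimates into the displayed bound and collecting terms produces $\|(\bbI-\boldsymbol{P}_{\Delta\Theta})\bbTheta\boldsymbol{x}_{\overline{\Delta}}\|_2^2\ge\alpha\|\boldsymbol{x}_{\overline{\Delta}}\|_2^2=\alpha\|\boldsymbol{x}-\boldsymbol{P}_\Delta\boldsymbol{x}\|_2^2$ with $\alpha$ as stated.

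Finally, a union bound over the four concentration events used above — the two-sided norm bound of the second paragraph counted as two events, plus the matrix-Chernoff event and the cross-term event — each failing with probability at most $\epsilon$, shows~\eqref{eq:prop1} holds with probability at least $1-4\epsilon$. The hard part, and the only step that is not bookkeeping, is controlling $\lambda_{\min}(\mathbf{U}_{\mathcal{P}\Delta\Theta}^\top\mathbf{U}_{\mathcal{P}\Delta\Theta})$ from below and the sampled cross correlation from above simultaneously; this is precisely where the subspace coherence $\mu(\operatorname{span}(\mathbf{U}_{\mathcal{P}\Delta}))$ enters and where the threshold on $N_o$ is forced. Since Proposition~\ref{prop:other_missing_analsis} identifies the detector~\eqref{eq:detector_overdet_missing} with the projection detector of~\cite[Section 5]{balzano2010high}, these estimates can be imported verbatim from there under the dictionary $n\mapsto N$, $m\mapsto N_o$, $d\mapsto N_{\ccalP\Delta}$, $\ccalS\mapsto\operatorname{span}(\mathbf{U}_{\mathcal{P}\Delta})$, and the constants $\alpha,\beta,\gamma,\delta$ in the statement are exactly those produced by that substitution.
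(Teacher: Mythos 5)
Your proof is correct and follows essentially the same route as the paper, whose entire proof is a one-line deferral to the proof of~\cite[Theorem 1]{balzano2010high}; what you have written is a faithful reconstruction of that argument (reduction to $\boldsymbol{x}_{\overline{\Delta}}$, two-sided norm concentration, matrix Chernoff for $\lambda_{\min}(\mathbf{U}_{\mathcal{P}\Delta\Theta}^\top\mathbf{U}_{\mathcal{P}\Delta\Theta})$, vector Bernstein for the cross term, union bound over four events) translated into the corollary's notation. The constants $\alpha,\beta,\gamma,\delta$ you derive match those in the statement, so no gap remains.
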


\begin{proof}
    The probabilistic bounds in \eqref{eq:prop1} follow by applying the proof of~\cite[Theorem 1]{balzano2010high}.
\end{proof}

The result provided in Proposition~\ref{prop:other_missing_analsis} indicates that, under the assumption that $\mathbf{U}_{\mathcal{P}\Delta \Theta}$ is full column rank, the GLRT-based detector for simplicial complex signals is equivalent to the detector proposed for missing data in~\cite{balzano2010high}. This requirement is the same as the one in~\cite[Th. 1]{chen15sampling} for perfect recovery under sampling, essentially stipulating that the $N_o$ observed rows of $\mathbf{U}_{\mathcal{P}\Delta}$ span $\reals^{N_\Delta}$. Otherwise, the problem falls into the underdetermined setting, discussed in the next section.

When $\beta$, $\gamma$, and $\delta$ are close to zero, the lower bound in $\|\boldsymbol{x}-\boldsymbol{P}_{{\Delta}{\Theta}} \boldsymbol{x}\|_2^2$ is approximately
\begin{equation}
\frac{N_o-N_{\ccalP {\Delta}} \mu(\operatorname{span}\left(\mathbf{U}_{\mathcal{P}{\Delta}}\right))}{N}\left\|\boldsymbol{x}-\boldsymbol{P}_{{\Delta}} \boldsymbol{x}\right\|_2^2.
\end{equation}
This arises when, for instance, $N_o$ is large or the subspace dimension $N_{\ccalP {\Delta}}$ is small. Because the coherence of $\operatorname{span}\left(\mathbf{U}_{\mathcal{P}{\Delta}}\right)$ is bounded by $1 \leq \mu(\operatorname{span}\left(\mathbf{U}_{\mathcal{P}{\Delta}}\right)) \leq \frac{N}{N_{\ccalP {\Delta}}}$, if $N_o \leq N_{\ccalP {\Delta}}$, the lower bound might always be zero or negative even if $\|\boldsymbol{x}-\boldsymbol{P}_{{\Delta}} \boldsymbol{x}\|_2^2 \geq 0$. Hence, the performance of the detector in~\eqref{eq:detector_overdet_missing} will be poor with high probability. This underscores that, for the proposed detector to function effectively, we need at least $N_{\ccalP {\Delta}}$ observations—that is, the dimension of the subspace we aim to detect.

\subsection{Underdetermined Case}

Now we deal with the case of having fewer observations than the subspace dimension, i.e., $N_o \leq N_{\ccalP {\Delta}}$. The fact that, for $\ccalH_1$, $\mathbf{U}_{\mathcal{P} \Theta} \left(\mathbf{U}_{\mathcal{P} \Theta}\right)^{\dagger} = \bbI_{N_o}$ and thus $\| \bbx - \mathbf{U}_{\mathcal{P}\Theta} (\mathbf{U}_{\mathcal{P}\Theta})^\dagger \bbx\|_2^2 = 0$ still holds if we estimate the MLE $\hat{\boldsymbol{s}}^*_1$ by solving~\eqref{eq:mle_overdet}. However, under $\mathcal{H}_0$, $\mathbf{U}_{\mathcal{P}\Theta,0}=\mathbf{U}_{\mathcal{P}\Delta \Theta}$ is a fat matrix. If this matrix is full row rank and we obtain the MLE of $\hat{\bbs}_0$ via~\eqref{eq:mle_overdet}, the solution is non-unique, and setting $\hat{\boldsymbol{s}}_0^{*} = \left(\mathbf{U}_{\mathcal{P}\Delta \Theta}\right)^{\dagger} \boldsymbol{x}$ makes detection impossible. This arises because, if $\mathbf{U}_{\mathcal{P}\Delta \Theta}$ is full row rank, its columns span $\reals^{N_o}$, implying that $\bbx \in \operatorname{span}(\mathbf{U}_{\mathcal{P}\Delta \Theta}) \equiv \reals^{N_o}$ in every scenario. Consequently, for our proposed detector, $\mathbf{U}_{\mathcal{P}\Delta \Theta} \left(\mathbf{U}_{\mathcal{P}\Delta \Theta}\right)^{\dagger} = \bbI_{N_o}$, which always yields $T(\bbx) = \| \bbx - \mathbf{U}_{\mathcal{P}\Delta \Theta}\left(\mathbf{U}_{\mathcal{P}\Delta \Theta}\right)^{\dagger}\bbx \|_2^2 = 0$ in \eqref{eq:detector_overdet_missing}.

For this more challenging case, we can employ a regularized version of the detector and obtain the MLEs of both $\hat{\bbs}_0$ and $\hat{\bbs}_1$ by solving
\begin{flalign} \label{eq:mle_underdet}
\begin{array}{l} \hat{\boldsymbol{s}}_{j}^{*} = \underset{\hat{\boldsymbol{s}}_{j} }{\operatorname{argmin}} \;\|\boldsymbol{x}-\mathbf{U}_{\mathcal{P}\Theta,j}\hat{\boldsymbol{s}}_{j}\|_2^2 + \lambda_j \Omega (\hat{\boldsymbol{s}}_{j})
\end{array}
\end{flalign}
where $\Omega (\hat{\boldsymbol{s}}_{j})$ leverages prior information about $\hat{\boldsymbol{s}}_{j}$, such as it being low-pass or sparse. For instance, if the simplicial embedding $\hat{\boldsymbol{s}}_{j}$ is low-pass, then we can set the regularizer $\lambda_j \Omega (\hat{\boldsymbol{s}}_{j})$ in equation~\eqref{eq:mle_underdet} to $\lambda_j\|\mathbf{R}_j \hat{\boldsymbol{s}}_{j}\|_2^2$, where $\mathbf{R}_j$ is a diagonal matrix with decreasing diagonal entries. A closed-form solution may exist depending on the regularization $\Omega$; otherwise, a numerical estimate can be obtained.

\section{Numerical results} \label{S:exps}
We corroborate the proposed detectors with numerical experiments on four different real-world datasets. In Sec.~\ref{sub:datasets}, we introduce the datasets, whereas in Sec.~\ref{sub:experi_hodge}, we evaluate the performance of the Hodge subspace detectors (HSD). In Sec.~\ref{sub:experi_dirac}, we evaluate the Dirac subspace detectors (DSD). Finally, in Sec.~\ref{sub:incomplete}, we assess the impact of having incomplete data.

\begin{figure}[t]
    \centering
    \includegraphics[width=0.4\textwidth]{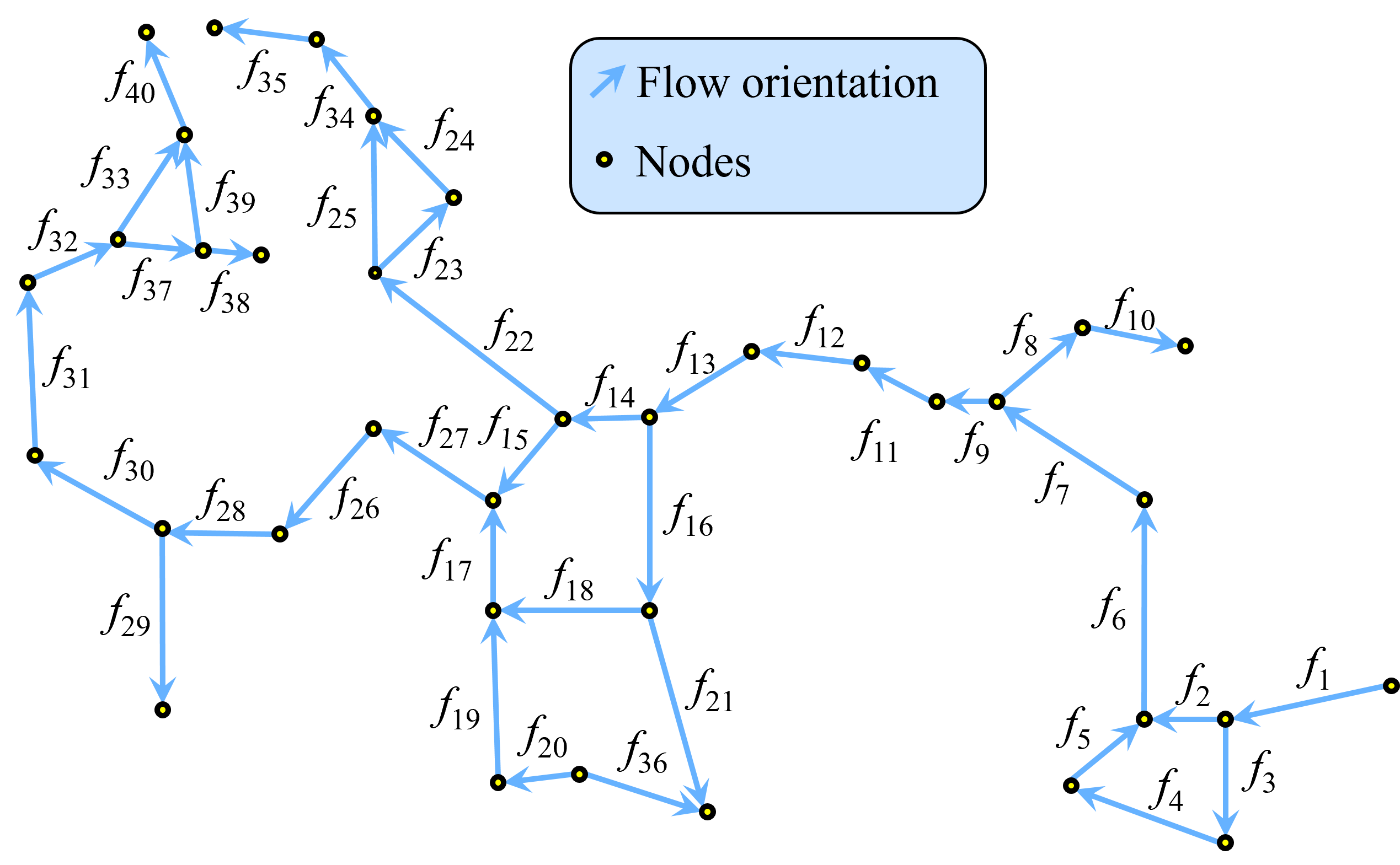}
    \caption{Cherry hills water network which has 36 nodes represent tanks, 40 edges represent the pipes and 2 triangles represents the areas enclosed by three pipes. Different water demands generates different water flow rate over the edges and water pressure over the nodes.}
    \label{fig:cherry}
\end{figure}

\subsection{Datasets}\label{sub:datasets}
We use four datasets, summarized in Table~\ref{tab:properties_datasets}:

\subsubsection{Forex~\cite{jia2019graph}}
This dataset represents foreign currency exchanges, where each currency is a node, pairwise exchanges between two currencies are treated as edges, and any three currencies form a triangle. The edge signal is the logarithm of the exchange rate. To ensure the exchange rates are arbitrage-free  --meaning no profit can be obtained by trading currencies in a loop-- the rates must balance in any cyclical exchange. For example, starting with currency A, converting to B, then to C, and finally back to A, should yield no net gain. Denoting the exchange rate between A and B as $r^{\mathrm{A}/\mathrm{B}}$, the arbitrage-free condition can be written as $r^{\mathrm{A}/\mathrm{B}}\,r^{\mathrm{B}/\mathrm{C}} = r^{\mathrm{A}/\mathrm{C}}$. Taking the logarithm of the exchange rate, defined as $\hat{r}^{\mathrm{A}/\mathrm{B}} = \log \bigl(r^{\mathrm{A}/\mathrm{B}}\bigr)$, we obtain $\hat{r}^{\mathrm{A}/\mathrm{B}} + \hat{r}^{\mathrm{B}/\mathrm{C}} = \hat{r}^{\mathrm{A}/\mathrm{C}}$, indicating that the edge signal is curl-free.

\subsubsection{Lastfm~\cite{jia2019graph}}
This dataset records instances when a user switches from one artist to another on a music player. Each artist is represented as a node, and an edge is created between two artists whenever a user switches from one artist to the other. Any triangle formed by three edges is treated as filled. The edge signal is built as follows: each time a user switches from artist $A$ to $B$, a unit is added to the edge signal from $A$ to $B$. Since users consistently switch to another artist after listening to one, except for the initial and terminal nodes, the divergence at other nodes is zero. Consequently, the edge signal is approximately divergence-free.
\begin{table*}[t]
\centering
\caption{{Properties of the datasets.}}
\begin{tabular}{llcccccccccc}
   \toprule
   \textbf{{Datasets}} &{Nodes}&{Edges}&{Triangles}&{edge signal property}&$N_{{\Delta}}$&$N_{ \overline{\Delta}}$&$N_{\ccalP {\Delta}}$&$N_{\ccalP \overline{\Delta}}$ \\
   \midrule
   \textbf{{Forex}~\cite{jia2019graph}}&{25}&{300}&{2300}&{curl-free}&24&276&48&2577 \\
   \textbf{{Lastfm}~\cite{jia2019graph}} &{657}&{1997}&{1276}&{divergence-free}&1341&656&2618&1312 \\
   \textbf{{Cherry hills}~\cite{krishnan2023simplicial}} &{36}&{40}&{2}&{curl-free}&38&2&74&4 \\
   \textbf{{Football}} &{11}&{55}&{165}&{divergence-free}&45&10&211&20 \\
   \bottomrule
\end{tabular}
\label{tab:properties_datasets}
\end{table*}

\subsubsection{Cherry hills~\cite{krishnan2023simplicial}}
This dataset represents a water distribution network, where each node corresponds to a tank, each edge to a pipe transporting water, and each triangle to an area enclosed by three pipes. The node signal is the water pressure at each tank (in pounds per square inch, scaled by $10^{-4}$), the edge signal captures water flow rate through each pipe (in cubic feet per second), and the triangle signal is the sum of the water demand across the three nodes forming the triangle (in cubic feet per second). The edge flow signals are generated with the EPANET software under a demand-driven model~\cite{krishnan2023simplicial}, where varying demands lead to different flow rates. The dataset comprises 55 hours of recorded edge signal and node pressure data, sampled hourly, with hourly averages used as experimental data.

\subsubsection{Football}
\begin{figure}[t]
    \centering
    \includegraphics[width=0.45\textwidth]{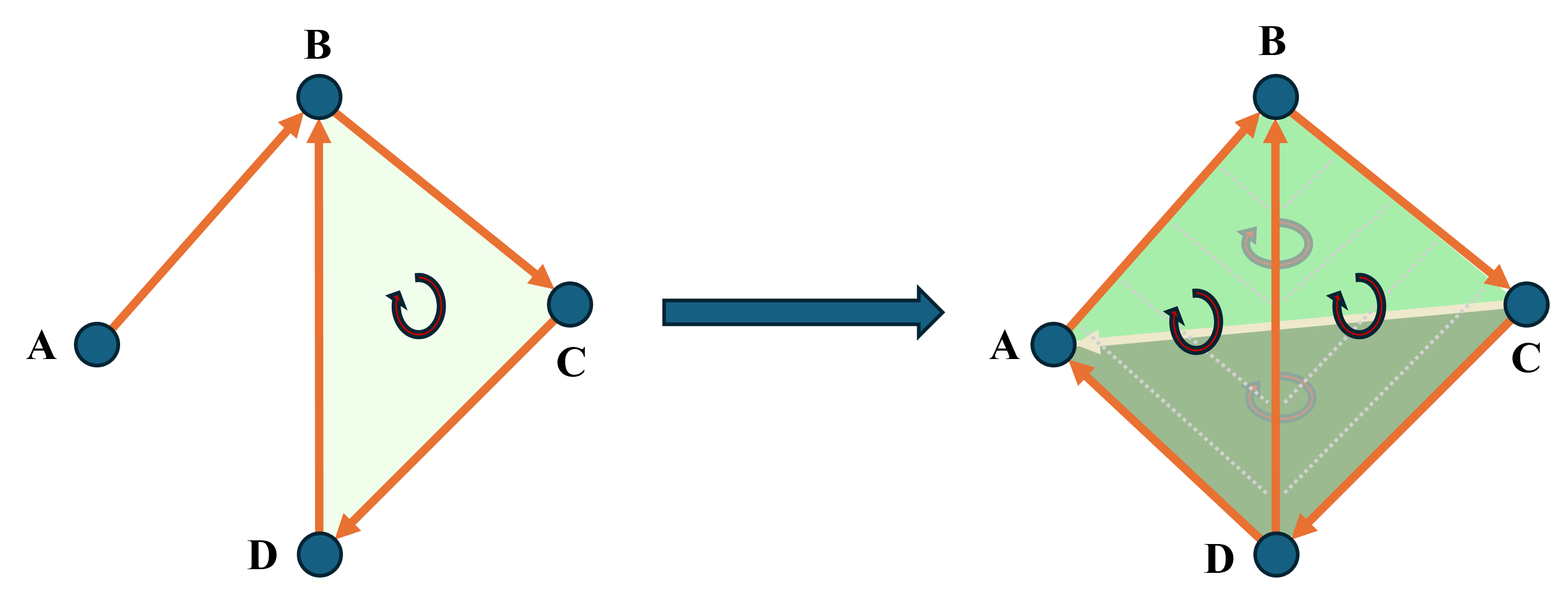}
    \caption{Illustration of the football dataset with four players A, B, C, and D. Suppose the ball is passed along the path A$\rightarrow$B$\rightarrow$C$\rightarrow$D$\rightarrow$B. There is a single passing loop with no passing error. This scenario can be modeled by a simplicial complex with four nodes, six edges, and four triangles. The edge signal is 1 on the edges $\{A,B\}$, $\{B,C\}$, $\{C,D\}$, and $\{D,B\}$, and 0 on the remaining edges. Notice that, except for the start node A and the end node B, all other nodes have zero divergence. Since no passing error occurs, all node signals are zero. Consequently, there is one passing loop $B \rightarrow C \rightarrow D \rightarrow B$, and only the triangle $\{B,C,D\}$ carries a value of 1, while the other triangles have zero signal.}
    \label{fig:Football}
\end{figure}

This dataset considers the passing data from the German national team, collected during their match against England in the 2020 European Championship. Each player is represented as a node. An edge exists between any two players, and a triangle represents the passing loop among three players. The method for acquiring simplicial complex signals is as follows: the node signal corresponds to the total number of passing errors each player made throughout the game; the edge signal reflects the number of passes between two players; and the triangle signal indicates the number of passing loops among three players. When players $A$, $B$, and $C$ form a passing loop, we add a unit to the triangle signal formed by those three players. This construction ensures that the edge signals are approximately divergence-free if no passing error occurs. Since a player receiving the ball passes it on without holding it, only the first and last players have non-zero divergence, as illustrated in Fig.~\ref{fig:Football}.

\subsection{Hodge Subspace Detector}\label{sub:experi_hodge}

\begin{table}[t]
\centering
\caption{Experimental setup for the Hodge subspace detection case.}
\label{tab:exp_setup_hodge}
\begin{tabular}{l|c|c|c}
   \toprule
   \textbf{Dataset} & {\textbf{$\mathcal{H}_0$}} & {\textbf{$\mathcal{H}_1$}} & \textbf{SNR}  \\
   \cmidrule(lr){1-1} \cmidrule(lr){2-2} \cmidrule(lr){3-3} \cmidrule(lr){4-4}  
   \textbf{{Forex}}~\cite{jia2019graph} & Curl-free flow & $\boldsymbol{s}^1 = \mathbf{B}_2 \bar{\boldsymbol{s}}^2$, $\bar{\boldsymbol{s}}^2 \sim \mathcal{N}\left(\mathbf{0},  \mathbf{I}\right)$ &  -10dB  \\
   \textbf{{Lastfm}}~\cite{jia2019graph} & Div-free flow & $\boldsymbol{s}^1 = \mathbf{B}_1^{\top} \bar{\boldsymbol{s}}^0$, $\bar{\boldsymbol{s}}^0 \sim \mathcal{N}\left(\mathbf{0},  \mathbf{I}\right)$ &  -10dB  \\
   \textbf{{Cherry}}~\cite{krishnan2023simplicial} & Curl-free flow &  Flow with curl component&  20dB  \\
   \textbf{{Football}} & Div-free flow & Non-div-free flow &  0dB  \\
   \bottomrule
\end{tabular}
\end{table}

\begin{table}[t]
\centering
\caption{{Area under the curves (AUC) for the complete data. -Th. and -Exp. represent the theoretical and empirical results}}
\begin{tabular}{l|cccc}
   \toprule
   \textbf{{Method}} & \textbf{Forex} & \textbf{Lastfm} & \textbf{Cherry} & \textbf{Football} \\
   \midrule
   HSD-Th.   & 0.80  & 1.00  & 0.82  & 0.73  \\
   HSD-Exp.  & 0.80$\pm$0.01  & 1.00$\pm$0.00  & 0.82$\pm$0.01  & 0.73$\pm$0.01  \\
   DSD-Th.   & 0.99  & 1.00  & 1.00  & 0.95  \\
   DSD-Exp.  & \textbf{0.99$\pm$0.00}  & \textbf{1.00$\pm$0.00}  & \textbf{1.00$\pm$0.00}  & \textbf{0.95$\pm$0.01}  \\
   B-SMSD~\cite{isufi2018blind} & 0.57$\pm$0.02  & 0.67$\pm$0.02  & 0.76$\pm$0.18  & 0.53$\pm$0.01  \\
   \bottomrule
\end{tabular}
\label{tab:AUC_complete}
\end{table}



\begin{figure}[t]
    \centering
    \begin{minipage}[t]{0.45\linewidth}
        \centering
        \includegraphics[width=1.08\linewidth]{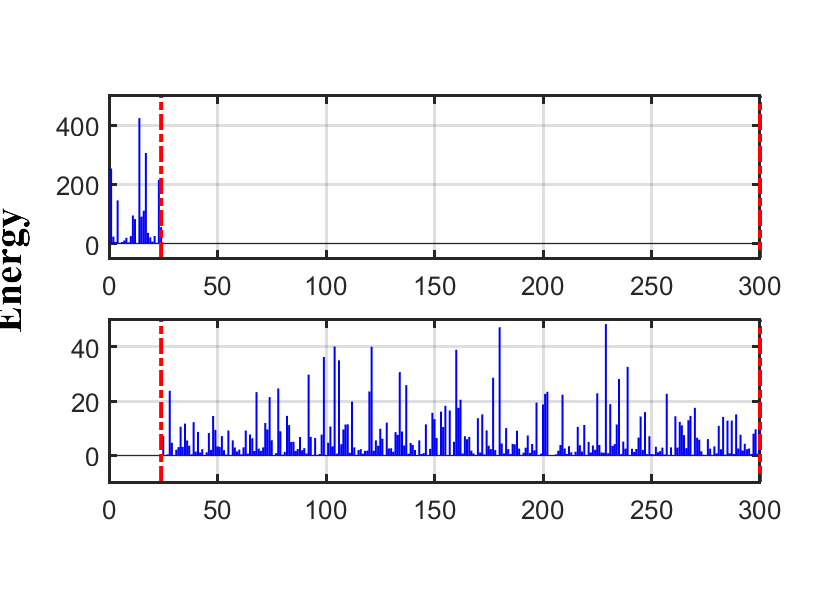}
        \centerline{(a) Forex}
    \end{minipage}%
    \begin{minipage}[t]{0.45\linewidth}
        \centering
        \includegraphics[width=1.08\linewidth]{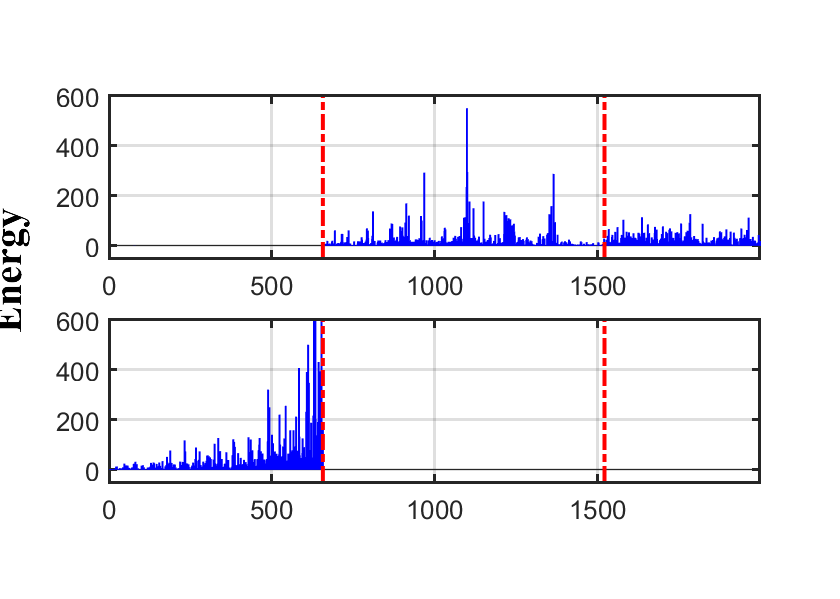}
        \centerline{(b) Lastfm}
    \end{minipage}
    \vfill
    \begin{minipage}[t]{0.45\linewidth}
        \centering
        \includegraphics[width=1.08\linewidth]{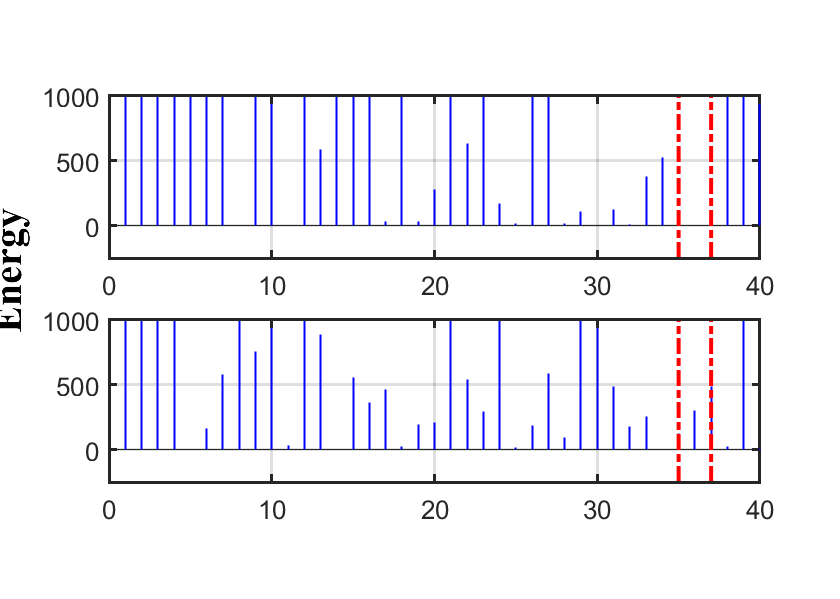}
        \centerline{(c) Cherry}
    \end{minipage}%
    \begin{minipage}[t]{0.45\linewidth}
        \centering
        \includegraphics[width=1.08\linewidth]{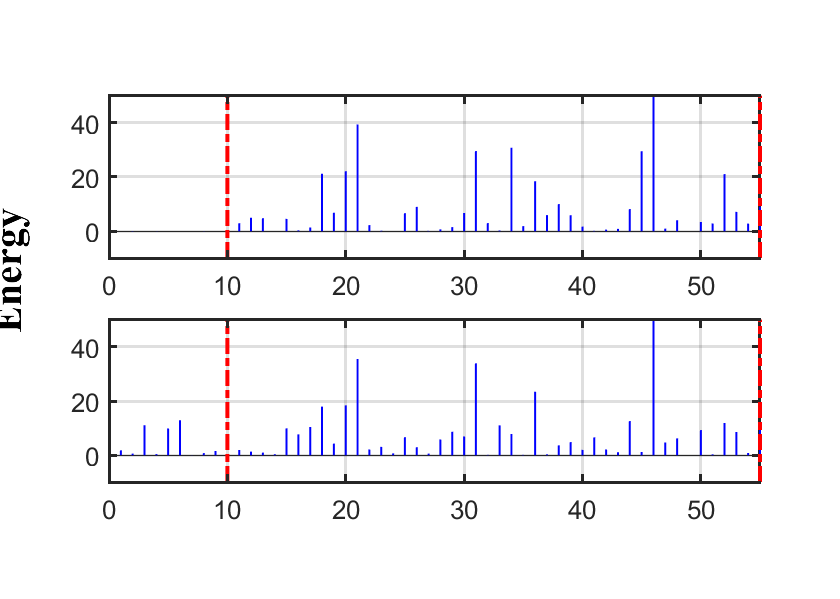}
        \centerline{(d) Football}
    \end{minipage}
    \caption{Energy of projection onto the Hodge subspace for different datasets. In (a), (b), (c) and (d), the upper and lower subgraphs are the energy of the edge signals projection under hypothesis $\mathcal{H}_0$ and $\mathcal{H}_1$ in the Hodge subspaces, respectively. The regions divided by the red lines represent, from left to right, the Hodge gradient, curl and harmonic subspaces.}
    \label{fig:energy_proj}
\end{figure}

\smallskip
\noindent\textbf{Experimental setup.} The experimental setup for the HSD is summarized in Table~\ref{tab:exp_setup_hodge}, while the signal energy projection onto the Hodge subspaces is shown in Fig.~\ref{fig:energy_proj}. Here, we focus on detecting edge signals without considering node and triangle signals.

\emph{Forex:} under hypothesis \(\mathcal{H}_0\), the edge signal represents a foreign exchange rate flow that is curl-free. Under hypothesis \(\mathcal{H}_1\), we generate the flow as \(\boldsymbol{s}^1 = \mathbf{B}_2 \bar{\boldsymbol{s}}^2\), where \(\bar{\boldsymbol{s}}^2 \sim \mathcal{N}\bigl(\mathbf{0}, \mathbf{I}_{N_2}\bigr)\), placing it in the curl subspace. This setup reflects a scenario in which the arbitrage-free condition is violated. We then add zero-mean Gaussian noise at an SNR of -10 dB. The goal is to detect whether the foreign exchange rate satisfies the arbitrage-free condition so that the edge signal is curl-free.

\emph{Lastfm:} under hypothesis \(\mathcal{H}_0\), the edge signal captures a user transition flow, which is divergence-free. Under \(\mathcal{H}_1\), we synthetically generate flows as \(\boldsymbol{s}^1 = \mathbf{B}_1^{\top} \bar{\boldsymbol{s}}^0\), where \(\bar{\boldsymbol{s}}^0 \sim \mathcal{N}\bigl(\mathbf{0}, \mathbf{I}_{N_0}\bigr)\), placing it in the gradient subspace. We then add zero-mean Gaussian noise at an SNR of -10 dB. The objective is to determine whether the edge signal is divergence-free.

\emph{Cherry hills:} edge signals under hypotheses \(\mathcal{H}_0\) and \(\mathcal{H}_1\) correspond to two distinct demand conditions, referred to as demand-0 and demand-1. By controlling demand-0, we ensure that the edge signals under \(\mathcal{H}_0\) are curl-free. We set the water demands to remain constant. Under hypothesis \(\mathcal{H}_0\), the edge signal denotes the flow rate in the pipes under a specified water demand-0. Under hypothesis \(\mathcal{H}_1\), the edge signal corresponds to the flow rate under a different specified water demand-1. As shown in Fig.~\ref{fig:energy_proj}-(c), the flow rate generated by demand-0 resides in the gradient and harmonic subspaces, exhibiting nearly zero projection energy onto the curl subspace. In contrast, the flow rate produced by demand-1 has a non-zero curl component. We then add zero-mean Gaussian noise at an SNR of 20 dB. The task is to identify the demand pattern of the water flow rate by detecting whether the flow is curl-free.

\emph{Football:} under hypothesis \(\mathcal{H}_0\), we construct the edge signal without incorporating passing errors. For example, if a pass from player A is intercepted and ultimately returned to player B, we treat it as a direct pass from A to B, making the passing flow approximately divergence-free and thus modeling a scenario where passing is uninterrupted. Under hypothesis \(\mathcal{H}_1\), we consider the true passing process, where the existence of passing errors results in an edge signal that is not divergence-free. We then add zero-mean Gaussian noise at an SNR of 0 dB to simulate real-world uncertainties and imperfections in the observation or measurement of the passing process. The objective is to determine whether passing interruptions occur by checking whether the edge signal is divergence-free.

For a fair comparison, we set the same energy level for the edge signals under both hypotheses. Our results are averaged over \(1 \times 10^3\) independent noisy realizations of a single sample. We select the area under the curves (AUCs) of the receiver operating characteristics (\(\text{P}_\text{D}\) vs \(\text{P}_\text{FA}\)) as our evaluation metric. For each dataset, we adjust the SNR to emphasize performance differences.

We compare our method with the blind simple matched subspace detector (B-SMSD)~\cite{isufi2018blind}, originally designed for graph signals. To adapt it for edge signals, we first map edges to nodes by constructing the line-graph~\cite{schaub2018flow}. This detector assumes the observed signal is bandlimited with respect to the graph Fourier transform of the line-graph, and then compares the out-of-band SNR with a threshold \(\gamma\). For the (bandwith of the) B-SMSD, we select 95\% of the line-graph eigenvectors corresponding to the smallest eigenvalues.

\smallskip
\noindent\textbf{Results.} The outcomes of the HSD are presented in Table~\ref{tab:AUC_complete}, where theoretical and experimental results align closely, corroborating the proposed theory. Although the Cherry dataset has the highest SNR (see Table~\ref{tab:exp_setup_hodge}), its detection performance for the HSD is not optimal due to the small dimension \(N_{\overline{\Delta}}\) of the complement subspace, making the energy detector in~\eqref{eq:detector_hodge} more noise-sensitive. In contrast, despite having the same SNR as the Forex dataset, Lastfm achieves better detection performance because its larger complement subspace dimension provides greater noise robustness. The baseline B-SMSD method is not effective at distinguishing between hypotheses because its design principle does not match the higher-order detection task. Despite relatively acceptable performance on the Cherry dataset—likely because its underlying topology closely resembles a path graph (see Figure~\ref{fig:cherry}), which approximates its line-graph—the high standard deviation across multiple runs reveals instability. Moreover, on other datasets, the B-SMSD clearly falls short, indicating that a line-graph approach is unsuitable for this task.

\subsection{Dirac Subspace Detector}\label{sub:experi_dirac}


\begin{table*}[t]
\centering
\caption{Experimental setup for the Dirac subspace detector.}
\label{tab:exp_setup_dirac}
\begin{tabular}{l|c|c|c|c|c|c}
   \toprule
   \textbf{Dataset} &{\textbf{$\mathcal{H}_0-node$}}& {\textbf{$\mathcal{H}_0-edge$}} &{\textbf{$\mathcal{H}_0-triangle$}}&{\textbf{$\mathcal{H}_1-node$}}& {\textbf{$\mathcal{H}_1-edge$}} &{\textbf{$\mathcal{H}_1-triangle$}}  \\
   \cmidrule(lr){1-1} \cmidrule(lr){2-2} \cmidrule(lr){3-3} \cmidrule(lr){4-4}  \cmidrule(lr){5-5}\cmidrule(lr){6-6}\cmidrule(lr){7-7}
   \textbf{{Forex}}  &$ \mathbf{B}_1 \bar{\boldsymbol{s}}^1$, $\bar{\boldsymbol{s}}^1 \sim \mathcal{N}\left(\mathbf{0},  \mathbf{I}\right)$& Curl-free flow &$\mathbf{0}$&$\mathbf{0}$& $ \mathbf{B}_2 \bar{\boldsymbol{s}}^2$, $\bar{\boldsymbol{s}}^2 \sim \mathcal{N}\left(\mathbf{0},  \mathbf{I}\right)$ &$ \mathbf{B}_2^\top \bar{\boldsymbol{s}}^1$, $\bar{\boldsymbol{s}}^1 \sim \mathcal{N}\left(\mathbf{0},  \mathbf{I}\right)$  \\
   \textbf{{Lastfm}}  &$\mathbf{0}$& Div-free flow &$ \mathbf{B}_2^\top \bar{\boldsymbol{s}}^1$, $\bar{\boldsymbol{s}}^1 \sim \mathcal{N}\left(\mathbf{0},  \mathbf{I}\right)$&$ \mathbf{B}_1 \bar{\boldsymbol{s}}^1$, $\bar{\boldsymbol{s}}^1 \sim \mathcal{N}\left(\mathbf{0},  \mathbf{I}\right)$& $ \mathbf{B}_1^{\top} \bar{\boldsymbol{s}}^0$, $\bar{\boldsymbol{s}}^0 \sim \mathcal{N}\left(\mathbf{0},  \mathbf{I}\right)$ &$\mathbf{0}$  \\
   \textbf{{Cherry}}  &Pressure-0& Non-curl flow &Area demand-0&Pressure-1&  Flow with curl component&Area demand-1  \\
   \textbf{{Football}} &Passing errors-0& Div-free flow &Passing loops-0&Passing errors-1& Non-div-free flow &Passing loops-1  \\
   \bottomrule
\end{tabular}
\end{table*}

\begin{figure}[t]
    \centering
    \begin{minipage}[t]{0.45\linewidth}
        \centering
        \includegraphics[width=1.1\linewidth]{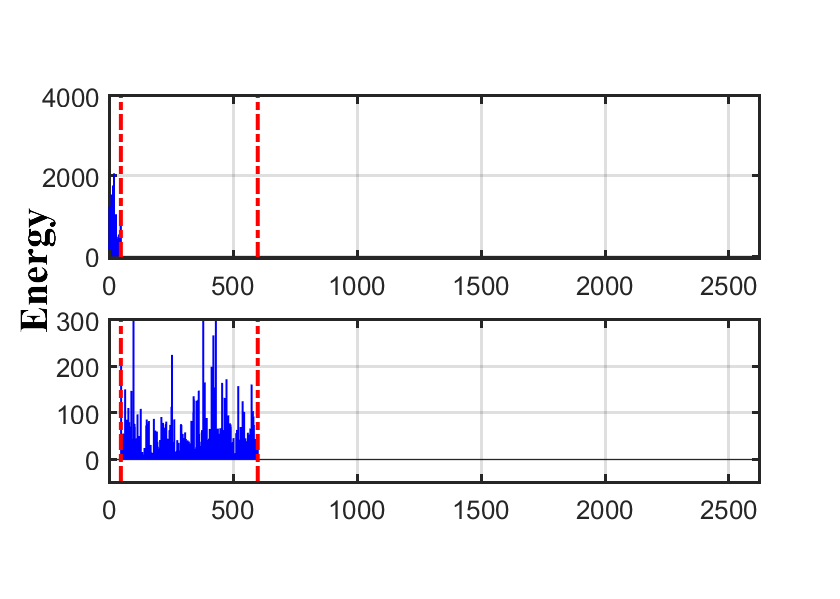}
        \centerline{(a) Forex}
    \end{minipage}%
    \begin{minipage}[t]{0.45\linewidth}
        \centering
        \includegraphics[width=1.1\linewidth]{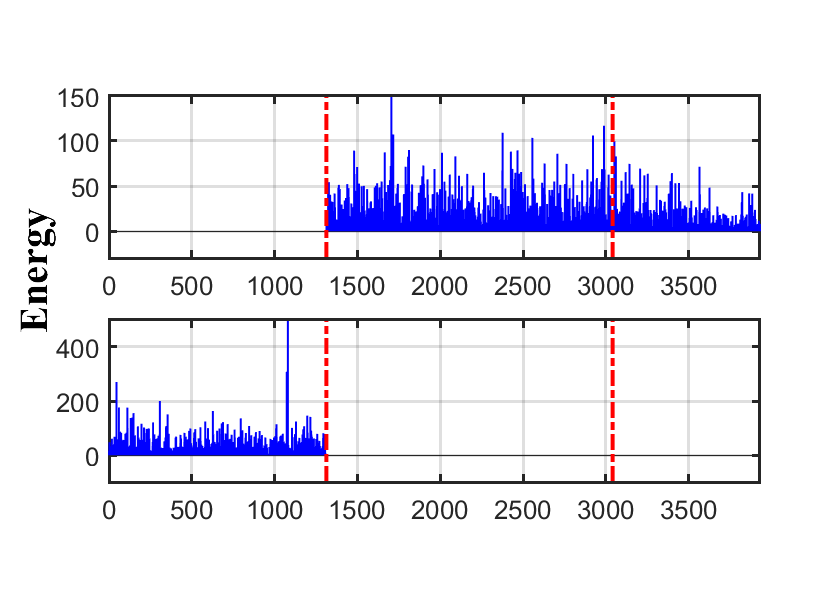}
        \centerline{(b) Lastfm}
    \end{minipage}
    \vfill
    \begin{minipage}[t]{0.45\linewidth}
        \centering
        \includegraphics[width=1.1\linewidth]{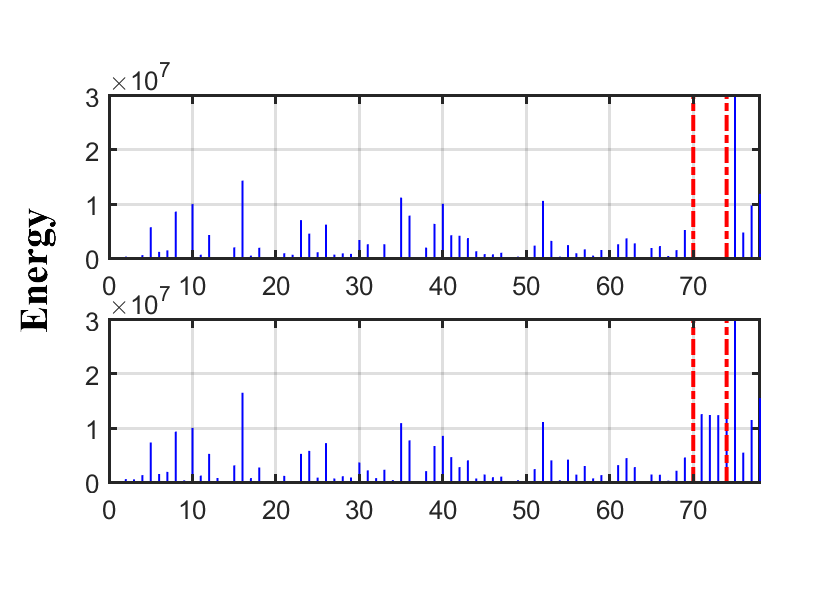}
        \centerline{(c) Cherry}
    \end{minipage}%
    \begin{minipage}[t]{0.45\linewidth}
        \centering
        \includegraphics[width=1.1\linewidth]{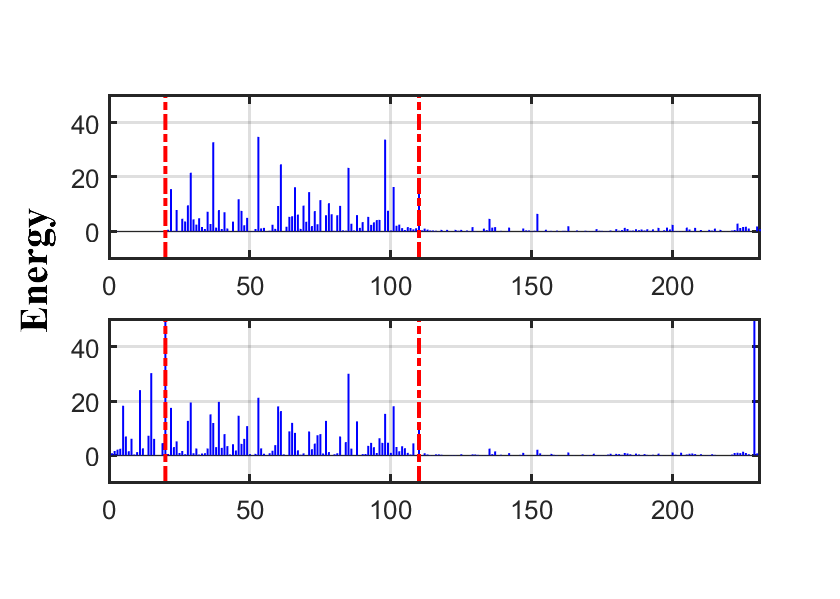}
        \centerline{(d) Football}
    \end{minipage}
    \caption{Energy of the projection onto the Dirac subspace for different datasets. In (a), (b), (c) and (d), the upper and lower subgraphs are the energy of the edge signals projection under hypothesis $\mathcal{H}_0$ and $\mathcal{H}_1$ in the Dirac subspaces, respectively. The regions divided by the red lines represent, in order: the Dirac gradient, curl and harmonic subspace.}
    \label{fig:energy_dirac}
\end{figure}

\smallskip
\noindent\textbf{Experimental setup.} The experimental setup is summarized in Table~\ref{tab:exp_setup_dirac}, and the signal projection energy onto the Dirac subspaces is shown in Fig.~\ref{fig:energy_dirac}. We focus on the detection task that involves node and triangle signals. The edge signals $\boldsymbol{s}^1$ to be detected are the same as those used in the Hodge-based experiments, with the key distinction being the inclusion of node and triangle signals to evaluate their impact on detection performance. Specifically, for the Forex and Lastfm datasets, we generate synthetic node and triangle signals to investigate their influence on the detector’s performance; for the Cherry and Football datasets, we employ real signals. The DSD (zero-padded) in Fig.~\ref{fig:impact} represents a special case in which the node and triangle signals are padded with zeros.

\emph{Forex:} under hypothesis $\mathcal{H}_0$, the node signal is $ \boldsymbol{s}^0 = \mathbf{B}_1 \bar{\boldsymbol{s}}^1$, where $\bar{\boldsymbol{s}}^1 \sim \mathcal{N}\left(\mathbf{0},  \mathbf{I}_{N_1}\right)$, and the triangle signal is $\boldsymbol{s}^2 = \mathbf{0}$. Accordingly, the constructed signal $\boldsymbol{s}$ lies in the Dirac gradient subspace, as illustrated in Fig.~\ref{fig:energy_dirac}-(a). Under hypothesis $\mathcal{H}_1$, the node signal is $\boldsymbol{s}^0 = \mathbf{0}$, and the triangle signal is $ \boldsymbol{s}^2 =\mathbf{B}_2^\top \bar{\boldsymbol{s}}^1$, where $\bar{\boldsymbol{s}}^1 \sim \mathcal{N}\left(\mathbf{0},  \mathbf{I}_{N_1}\right)$. The resulting signal $\boldsymbol{s}$ lies in the Dirac curl subspace.

\emph{Lastfm:} under hypothesis $\mathcal{H}_0$, the node signal is $\boldsymbol{s}^0 = \boldsymbol{0}$, and the triangle signal is $\boldsymbol{s}^2 = \mathbf{B}_2^\top \bar{\boldsymbol{s}}^1$, where $\bar{\boldsymbol{s}}^1 \sim \mathcal{N}\left(\mathbf{0}, \mathbf{I}_{N_1}\right)$. Hence, the constructed signal $\boldsymbol{s}$ occupies the Dirac curl and harmonic subspace without the Dirac gradient component, as shown in Fig.~\ref{fig:energy_dirac}-(b). Under hypothesis $\mathcal{H}_1$, the node signal is $\boldsymbol{s}^0 = \mathbf{B}_1 \bar{\boldsymbol{s}}^1$, where $\bar{\boldsymbol{s}}^1 \sim \mathcal{N}\left(\mathbf{0},  \mathbf{I}_{N_1}\right)$, and the triangle signal is $ \boldsymbol{s}^2 =\mathbf{0}$. As a result, the constructed signal $\boldsymbol{s}$ resides in the Dirac gradient subspace.

\emph{Cherry:} under hypotheses $\mathcal{H}_0$ and $\mathcal{H}_1$, the node signal $\boldsymbol{s}^0$ represents the water pressure at each node, while the triangle signal $\boldsymbol{s}^2$ is the sum of the water demands in each triangular area (i.e., over the three nodes forming a triangle). Consequently, under $\mathcal{H}_0$, the constructed signal $\boldsymbol{s}$ remains in the Dirac gradient and harmonic subspaces, containing no Dirac curl component. Under $\mathcal{H}_1$, however, there is a curl component, as illustrated in Fig.~\ref{fig:energy_dirac}-(c). Here, the priors for $\mathcal{H}_0$ and $\mathcal{H}_1$ stem from observations in specific experimental results.

\emph{Football:} the node and triangle signals capture the passing errors of each player and the passing loops among three players, respectively, as shown in Fig.~\ref{fig:Football}. The resulting signal $\boldsymbol{s}$ spans the Dirac curl and harmonic subspaces. Under $\mathcal{H}_1$, however, $\boldsymbol{s}$ adds a non-zero Dirac gradient component, as depicted in Fig.~\ref{fig:energy_dirac}-(d).

For a fair comparison, we set the energy of the signal $\boldsymbol{s}$ to be equal under both hypotheses. We then add zero-mean Gaussian noise that preserves the same edge-signal SNR used in the Hodge-based experiments.


\begin{figure}[t]
    \centering
    \begin{minipage}[t]{0.5\linewidth}
        \centering
        \includegraphics[width=1.1\linewidth]{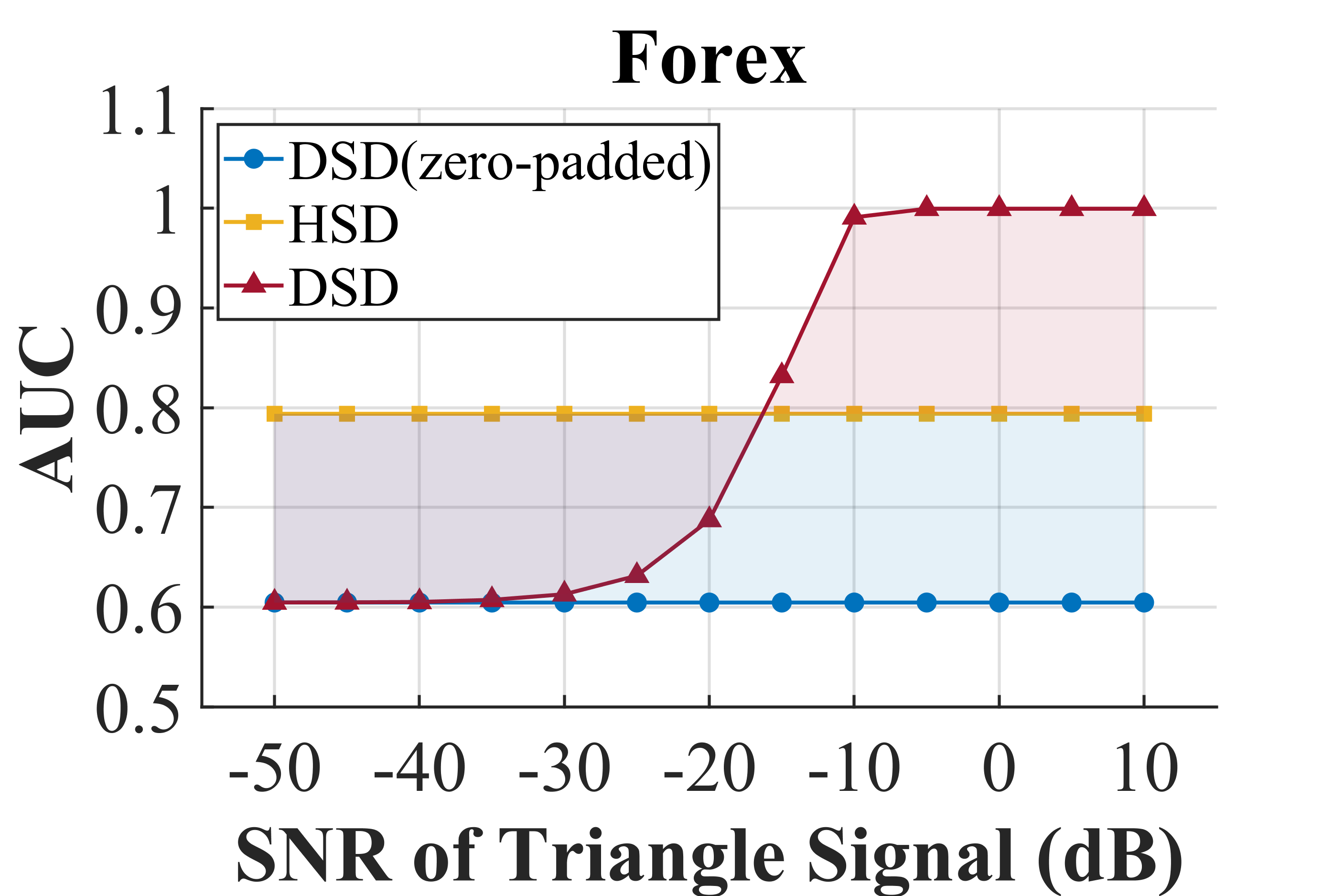}
    \end{minipage}%
    \begin{minipage}[t]{0.5\linewidth}
        \centering
        \includegraphics[width=1.1\linewidth]{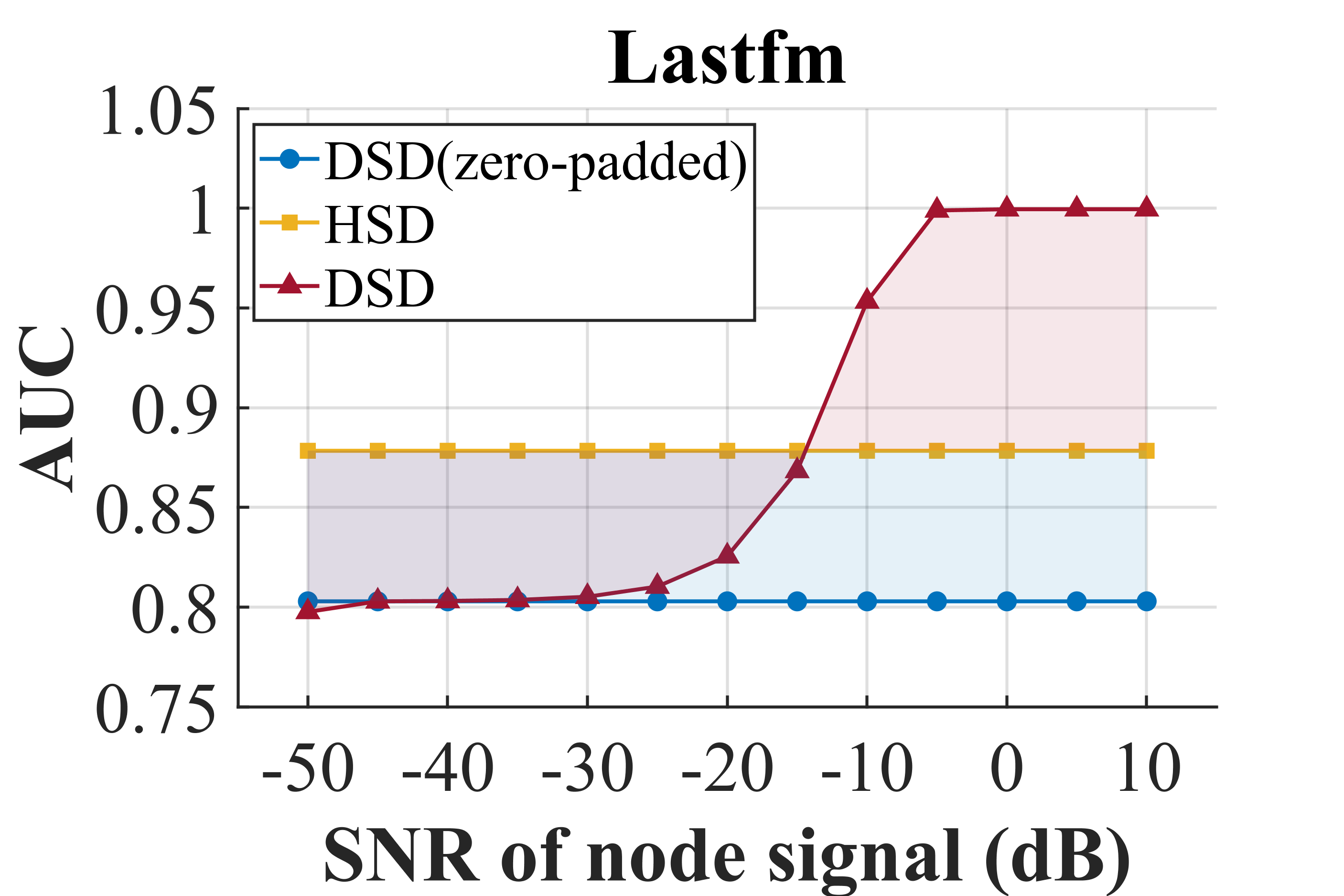}
    \end{minipage}
    \caption{Area under the curves (AUC) for different detectors. The yellow line is the HSD. The blue line is the DSD without considering the information in the node and triangle signals (zero-padded). The red line is the DSD considering the node and triangle signals. The SNRs of the edge signal are -10 and -15 dBs, for the Forex and Lastfm datasets, respectively.}
    \label{fig:impact}
\end{figure}

\smallskip
\noindent\textbf{Results.}
The results for the controlled setting on the Forex and Lastfm datasets are shown in Fig.~\ref{fig:impact}. They indicate that, as the SNR of the node or the triangle signal increases, the detection performance improves gradually. When the energy of the node or triangle signal becomes sufficiently large, the AUC approaches one. The yellow line representing the HSD lies above the blue line for the DSD, which implies that if the node or triangle signal is unknown and therefore zero-padded, the HSD performs better than the DSD. This occurs because the dimension $N_{\ccalP \overline{\Delta}}$ in the Dirac-based experiments is larger than the Hodge-based $N_{\overline{\Delta}}$, while the node or triangle signal does not contribute to the detection when it is zero-padded. Consequently, the deflection coefficient $d^2$ is smaller for the DSD when only the edge signal is considered, resulting in poorer detection performance (c.f. \eqref{eq:pd_energy}). However, as the energy of the node or triangle signals increases, the DSD's performance rises and eventually surpasses that of the HSD. The performance of the DSD is heavily influenced by the properties of the node and triangle signals. Specifically, when these signals reside in certain Dirac subspaces, the Dirac detector can more effectively capture the structural characteristics and outperform the HSD. In practical scenarios, however, node and triangle signals may deviate from these assumptions; under such circumstances, the DSD might no longer surpass the HSD's performance.

From Table~\ref{tab:AUC_complete}, we see that the DSD outperforms every other alternative, including real data node and triangle signals. As noted, adding node and triangle signals enlarges the energy gap in the complement Dirac subspace between hypotheses $\mathcal{H}_0$ and $\mathcal{H}_1$, whereas the HSD considers only the edge signal information.

\subsection{Incomplete data}\label{sub:incomplete}
In this subsection, we address missing data by varying the sampling rate from 0.1 to 1. We compare the performance with that of an interpolation detector. We first interpolate the incomplete data based on prior information, following~\cite{schaub2018flow}, and then perform detection on the interpolated signal. The challenge is that the signals under hypotheses $\mathcal{H}_0$ and $\mathcal{H}_1$ have different priors. Consequently, we leverage the subspace prior of the signal under hypothesis $\mathcal{H}_0$ for the interpolation task, and also under hypothesis $\mathcal{H}_1$ because the exact origin of the noisy signal is unknown. Concretely, we solve problem
\begin{equation}\label{eq:interpolation}
\begin{aligned}
    \argmin_{\hat{\boldsymbol{x}}} \hspace{.2cm} \| \mathbf{Q} \hat{\boldsymbol{x}} \|_2^2  \hspace{.6cm} \text{subject to} \hspace{.2cm} \boldsymbol{\Theta} \hat{\boldsymbol{x}} = \boldsymbol{x},
\end{aligned}
\end{equation}
where the matrix $\mathbf{Q}$ is $\bbU_{\overline{\Delta}}^\top$ or $\mathbf{U}_{\mathcal{P}\overline{\Delta}}^{\top}$ for the Hodge- and Dirac-based experiments, respectively. The matrix $\boldsymbol{\Theta} \in\{0,1\}^{N_o \times N}$ is the sampling matrix, and $\boldsymbol{x}$ is the observation. The objective is to minimize the energy of the interpolated signals in the complement subspaces, given that this energy should be zero under hypothesis $\mathcal{H}_0$. We solve this problem via ADMM.


\begin{figure}[t]
    \centering
    \begin{minipage}[t]{0.45\linewidth}
        \centering
        \includegraphics[width=1.1\linewidth]{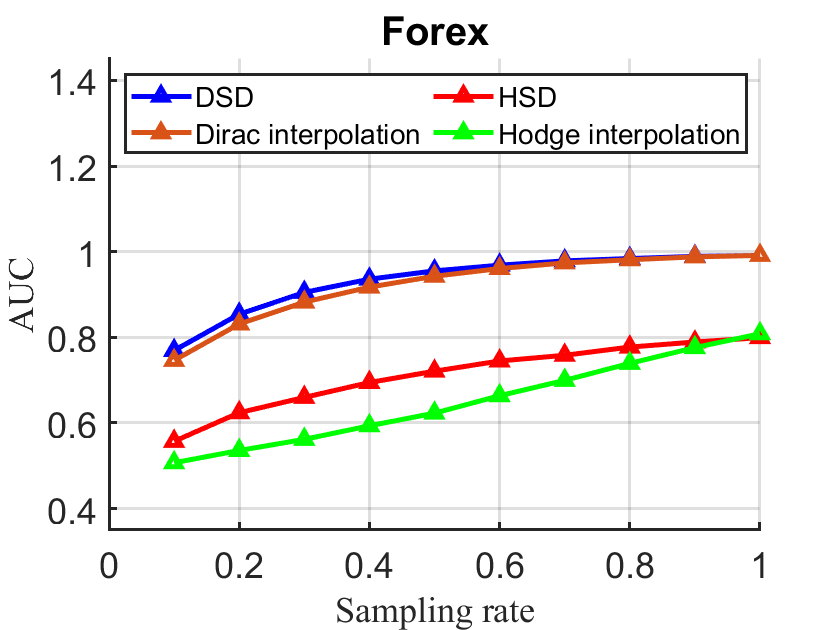}
    \end{minipage}%
    \begin{minipage}[t]{0.45\linewidth}
        \centering
        \includegraphics[width=1.1\linewidth]{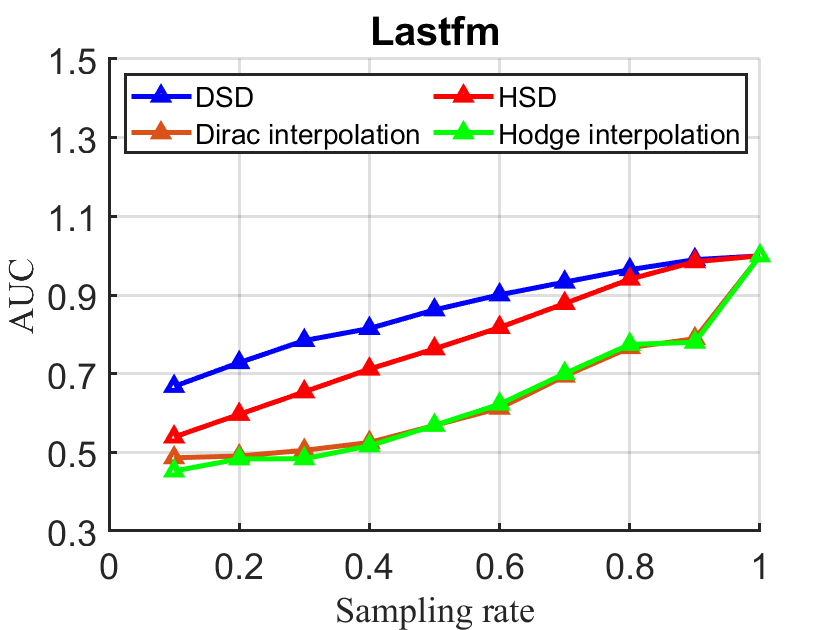}
    \end{minipage}
    \vfill
    \begin{minipage}[t]{0.45\linewidth}
        \centering
        \includegraphics[width=1.1\linewidth]{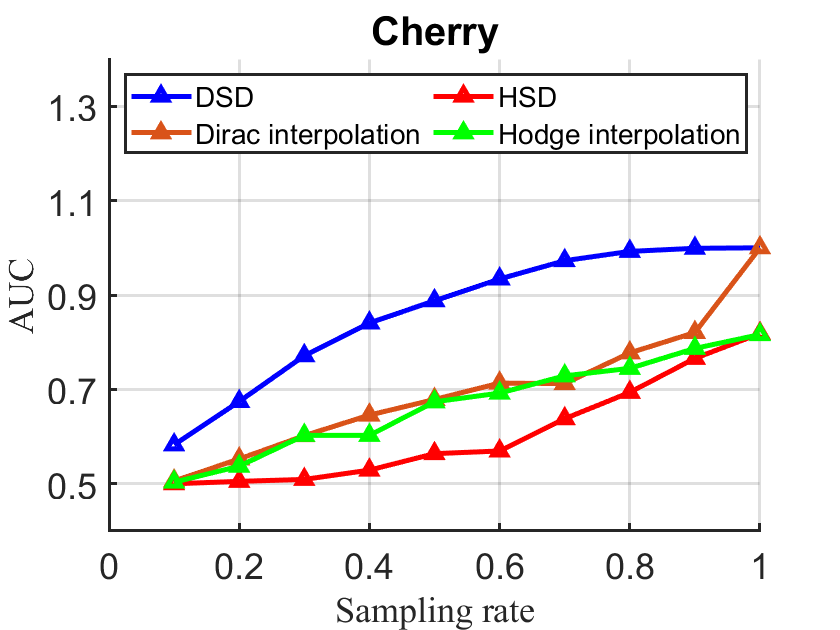}
    \end{minipage}%
    \begin{minipage}[t]{0.45\linewidth}
        \centering
        \includegraphics[width=1.1\linewidth]{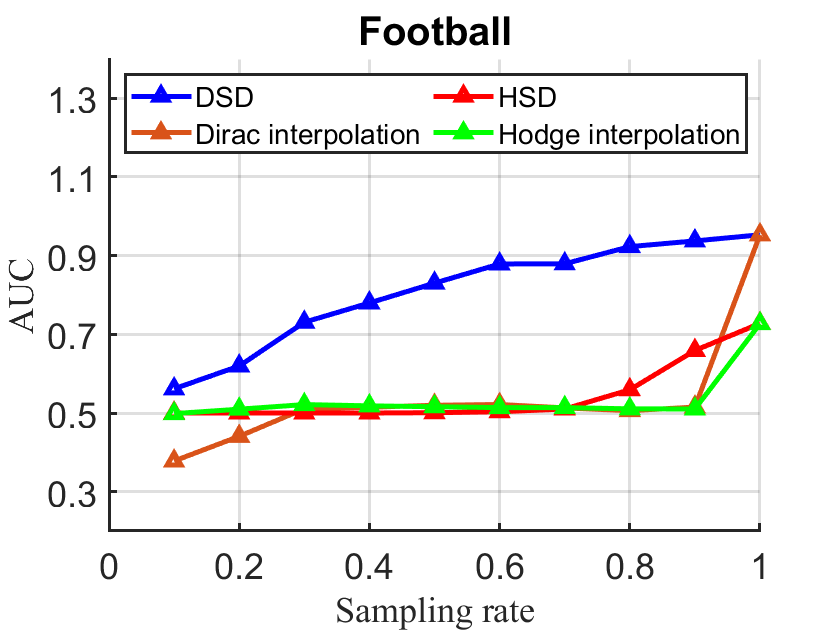}
    \end{minipage}
    \caption{Area under the curves (AUC) for the incomplete data. The percentage of the missing data is ranging from 0\% to 90\%. }
\label{fig:AUC_incomplete}
\end{figure}

\smallskip\noindent
\textbf{Overdetermined case.}
Figure~\ref{fig:AUC_incomplete} presents the results, where the proposed GLRT-based detector proves effective for both the HSD and DSD. The DSD consistently performs better because the information contributed by the node or triangle signals bolsters the detection task, even when some of the data is missing. Across different datasets, the performance of the HSD and DSD varies, largely due to the node or triangle signals' differing power levels. When these signals hold greater energy (e.g., in the Football dataset), the DSD’s improvement over the HSD becomes more pronounced.

The performance of the interpolation detector differs significantly among datasets because it is not an optimal solution, and its effectiveness depends heavily on the specific properties of the underlying signals. The interpolation baseline does not perform well here because the regularizer in~\eqref{eq:interpolation} imposes an inaccurate prior for the signal under $\mathcal{H}_1$, reducing its efficacy. Moreover, because the matrix $\mathbf{Q}$ is fat, the solution to the interpolation problem is non-unique, and ADMM-based outcomes lack stability. This instability is one of the primary causes of the interpolation detector’s poor performance.

\smallskip\noindent
\textbf{Underdetermined case.} Finally, we assess the underdetermined scenario in~\eqref{eq:test_missing}. We set the regularizer in~\eqref{eq:mle_underdet} based on the prior knowledge of $\hat{\boldsymbol{s}}_{j}$. Employing synthetic data on the topologies of these four datasets allows us to verify that incorporating the prior information on the signal can enhance detection performance. For simplicity, we only examine the underdetermined cases in the Dirac setting, as described in Table~\ref{tab:dirac_setup}, where $i$ indexes the vector. We have prior knowledge that both simplicial embeddings $\hat{\boldsymbol{s}}_{0}$ and $\hat{\boldsymbol{s}}_{1}$ are low-pass. Thus, in~\eqref{eq:mle_underdet}, we set $\lambda_j \Omega (\hat{\boldsymbol{s}}_{j})$ as $\lambda_j\|\mathbf{R}_j \hat{\boldsymbol{s}}_{j}\|_2^2$, where $\mathbf{R}_j$ is diagonal with decreasing diagonal entries.

As shown in Fig.~\ref{fig:AUC_overdet}, the underdetermined detector that incorporates prior knowledge of the signal outperforms approaches lacking this information. By contrast, the interpolation detector yields suboptimal performance because it fails to effectively utilize accurate prior knowledge.

\begin{figure}[t]
    \centering
    \begin{minipage}[t]{0.45\linewidth}
        \centering
        \includegraphics[width=1.1\linewidth]{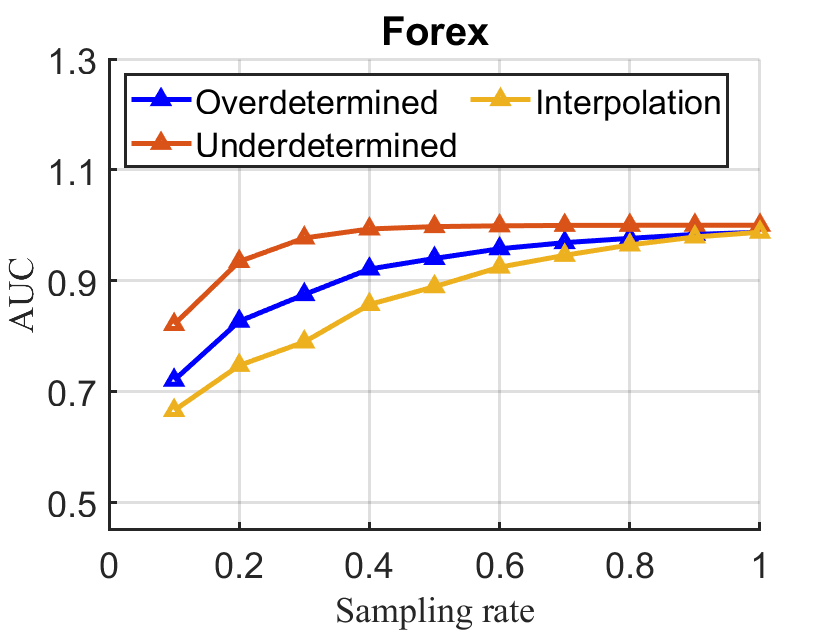}
    \end{minipage}%
    \begin{minipage}[t]{0.45\linewidth}
        \centering
        \includegraphics[width=1.1\linewidth]{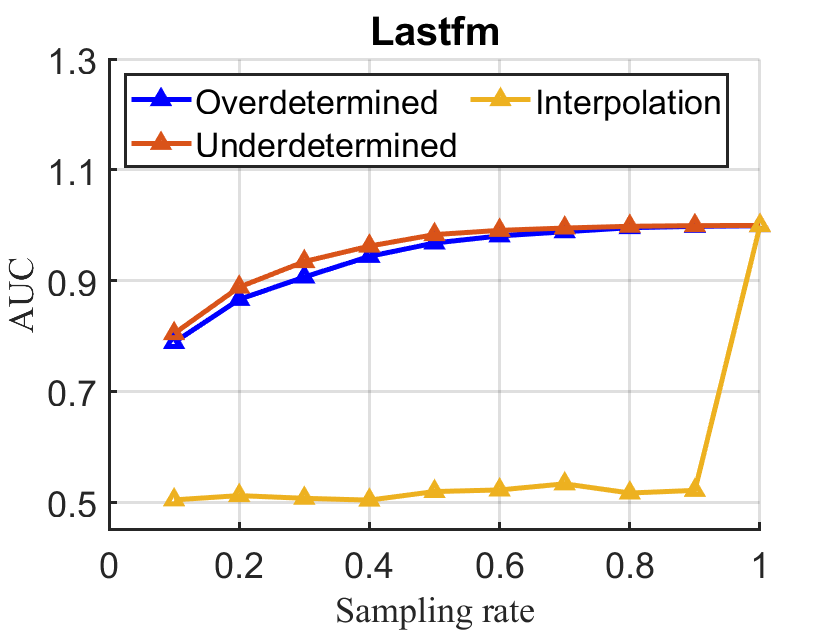}
    \end{minipage}
    \vfill
    \begin{minipage}[t]{0.45\linewidth}
        \centering
        \includegraphics[width=1.1\linewidth]{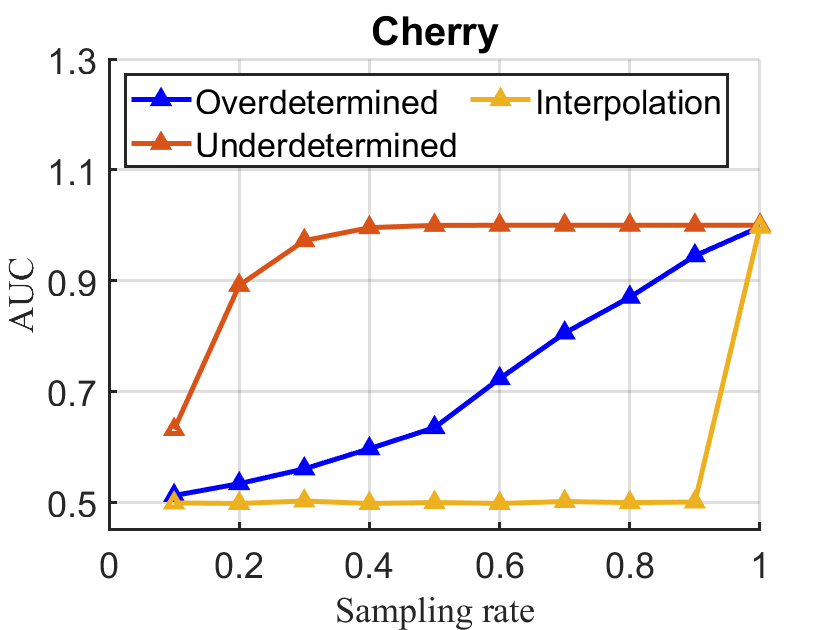}
    \end{minipage}%
    \begin{minipage}[t]{0.45\linewidth}
        \centering
        \includegraphics[width=1.1\linewidth]{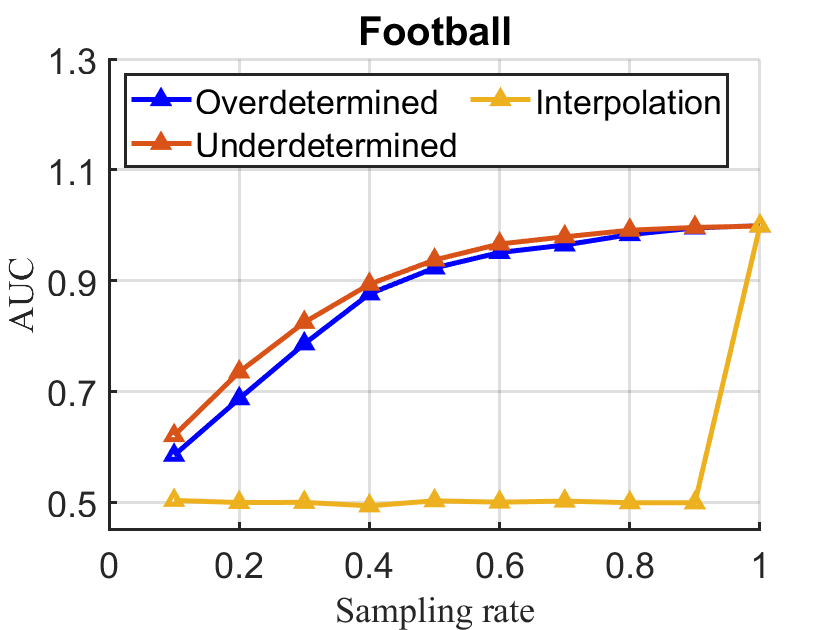}
    \end{minipage}
    \caption{Area under the curves (AUC) for overdetermined and underdetermined cases. We assume the prior information of the $\hat{\boldsymbol{s}}_{0}$ and $\hat{\boldsymbol{s}}_{1}$ are both low-pass.}
\label{fig:AUC_overdet}
\end{figure}

\begin{table*}[t]
\centering
\caption{Experimental setup for the edge signals in the underdetermined experiment. }
\begin{tabular}{l|c|c|c|c|c}
   \toprule
   \textbf{Dataset} & {\textbf{$\mathcal{H}_0$}} & {\textbf{$\mathcal{H}_1$}} &$[\lambda_0\mathbf{R}_0]_{ii}$&$[\lambda_1\mathbf{R}_1]_{ii}$& \textbf{SNR}  \\
   \cmidrule(lr){1-1} \cmidrule(lr){2-2} \cmidrule(lr){3-3} \cmidrule(lr){4-4}  \cmidrule(lr){5-5} \cmidrule(lr){6-6}
   \textbf{Forex}~\cite{jia2019graph} & 
    $[\hat{\boldsymbol{s}}_{0}]_i \sim \mathcal{N}\left(\text{exp}(-i/20)^{\top},  10^{-3}\right)$ & 
    $[\hat{\boldsymbol{s}}_{1}]_i \sim \mathcal{N}\left(\text{exp}(-i/1000)^{\top},  10^{-3}\right)$ & 
    $0.01*\text{exp}(i/50)$ & 
    $\text{exp}(i/2000)$ & 
    -10dB  \\
   \textbf{{Lastfm}}~\cite{jia2019graph} & $[\hat{\boldsymbol{s}}_{0}]_i \sim \mathcal{N}\left(\text{exp}(-i/1000)^{\top},  10^{-3}\right)$ & $[\hat{\boldsymbol{s}}_{1}]_i \sim \mathcal{N}\left(\text{exp}(-i/3000)^{\top},  10^{-3}\right)$ &0.01*$\text{exp}(i/50)$&$\text{exp}(i/2000)$&  -10dB  \\
   \textbf{{Cherry}}~\cite{krishnan2023simplicial} & $[\hat{\boldsymbol{s}}_{0}]_i \sim \mathcal{N}\left(\text{exp}(-i/20)^{\top},  10^{-3}\right)$ & $[\hat{\boldsymbol{s}}_{1}]_i \sim \mathcal{N}\left(\text{exp}(-i/30)^{\top},  10^{-3}\right)$ &0.01*$\text{exp}(i/2)$&$\text{exp}(i/100)$&  20dB  \\
   \textbf{{Football}} & $[\hat{\boldsymbol{s}}_{0}]_i \sim \mathcal{N}\left(\text{exp}(-i/100)^{\top},  10^{-3}\right)$ & $[\hat{\boldsymbol{s}}_{1}]_i \sim \mathcal{N}\left(\text{exp}(-i/200)^{\top},  10^{-3}\right)$ &0.01*$\text{exp}(i/5)$&$\text{exp}(i/50)$&  0dB  \\
   \bottomrule
\end{tabular}
\label{tab:dirac_setup}
\end{table*}

\section{Conclusion} \label{S:conclusion}

This paper proposed an MSD framework to determine whether a simplicial complex signal resides in a specific subspace of interest via hypothesis testing. We first applied the methodology to $k$-signals (node, edge, or triangle signals) to detect membership in gradient, curl, harmonic, or combined subspaces of the Hodge Laplacian. We then extended our approach to simplicial complex signals using the Dirac operator and its associated subspaces, thereby establishing a theoretical link between the Hodge and Dirac frameworks. The resulting detector, which is optimal under a GLRT perspective, leverages the signal's energy in the orthogonal complement of the target subspace. Recognizing the prevalence of missing data in real-world signals, we also developed an optimal detector for incomplete observations.

We evaluated our proposed MSD on four real-world simplicial complexes, two of which include real simplicial signals residing in Dirac subspaces. The results demonstrated superior performance by (i) considering the entire simplicial signal and (ii) employing the GLRT-optimal detector.

Future work will focus on extending this framework to other topological spaces—such as cell complexes or hypergraphs—and on exploring the task of jointly detecting and localizing anomalies in the simplicial subspaces.


\appendix

\subsection{Proof of Proposition~\ref{prop:asymptotic}} \label{sub:proof_prop_asymptotic}

Adapting the derivations provided in~\cite{kay1998fundamentals}, we have the following proof. To determine the asymptotic performance of an energy detector, we need to solve for its first second-order moments. The Chi-square distribution with the non-centraility parameter $\left\|\hat{\boldsymbol{s}}_{\overline{\Delta}}\right\|_2^2/\sigma^2$ and $N_{\overline{\Delta}}$ degrees of freedom :
\begin{equation}
\left\{\begin{aligned}
& E\left(T(\hat{\boldsymbol{x}}_{\overline{\Delta}}) ; \mathcal{H}_0\right)=N_{\overline{\Delta}} \\
& E\left(T(\hat{\boldsymbol{x}}_{\overline{\Delta}}) ; \mathcal{H}_1\right)=\left\|\hat{\boldsymbol{s}}_{\overline{\Delta}}\right\|_2^2/\sigma^2+N_{\overline{\Delta}} \\
& \operatorname{var}\left(T(\hat{\boldsymbol{x}}_{\overline{\Delta}}) ; \mathcal{H}_0\right)=2 N_{\overline{\Delta}}\\
& \operatorname{var}\left(T(\hat{\boldsymbol{x}}_{\overline{\Delta}}) ; \mathcal{H}_1\right)=4 \left\|\hat{\boldsymbol{s}}_{\overline{\Delta}}\right\|_2^2/\sigma^2+2 N_{\overline{\Delta}}
\end{aligned}\right..
\end{equation}
Thus, the false alarm and the detection probability of the energy detector can be expressed respectively as
\begin{equation}
\text{P}_{\mathrm{FA}}=Q\left(\frac{\gamma-N_{\overline{\Delta}}}{\sqrt{2 N_{\overline{\Delta}}}}\right)
\end{equation}
\begin{equation}
\text{P}_{\mathrm{D}}=Q\left(\frac{\gamma -\left\|\hat{\boldsymbol{s}}_{\overline{\Delta}}\right\|_2^2/\sigma^2 - N_{\overline{\Delta}}}{\sqrt{4 \left\|\hat{\boldsymbol{s}}_{\overline{\Delta}}\right\|_2^2/\sigma^2+2N_{\overline{\Delta}}}}\right)
\end{equation}
Following a standard routine, the detection probability can be written into a function of the false alarm probability as
\begin{equation}
\text{P}_{\mathrm{D}}=Q\left(\frac{Q^{-1}\left(\text{P}_{\mathrm{FA}}\right)-\sqrt{\frac{N_{\overline{\Delta}}}{2}} \frac{\left\|\hat{\boldsymbol{s}}_{\overline{\Delta}}\right\|_2^2/\sigma^2}{N_{\overline{\Delta}}}}{\sqrt{1+2 \frac{\left\|\hat{\boldsymbol{s}}_{\overline{\Delta}}\right\|_2^2/\sigma^2}{N_{\overline{\Delta}}}}}\right)
\end{equation}
When the number of degrees of freedom $N_{\overline{\Delta}}$ is large, the term $\left\|\hat{\boldsymbol{s}}_{\overline{\Delta}}\right\|_2^2/(\sigma^2 N_{\overline{\Delta}}) \approx 0$. Hence, by expanding the argument of the $Q$ function using a first-order Taylor expansion,  the detection probability is approximated as
\begin{equation}
\text{P}_{\mathrm{D}} \approx Q\left(Q^{-1}\left(\text{P}_{\mathrm{FA}}\right)-\sqrt{\frac{(\left\|\hat{\boldsymbol{s}}_{\overline{\Delta}}\right\|_2^2/\sigma^2)^2}{2N_{\overline{\Delta}}}}\right),
\end{equation}
which concludes the proof.

\subsection{Proof of Proposition \ref{prop:hodge_dirac}}
\label{sub:proof_prop_nonzeropad}

We start with~\eqref{eq:dirac_to_hodge1}.
The condition $[\boldsymbol{s}^0\|\boldsymbol{s}^1\| \boldsymbol{s}^2]\in \operatorname{span}\left(\mathbf{D}_l\right)$ indicates that there exists a nonzero simplicial signal $ [\widetilde{\boldsymbol{s}}^0\|\widetilde{\boldsymbol{s}}^1\| \widetilde{\boldsymbol{s}}^2]$ that satisfies
\begin{equation}
\left[\begin{array}{c}
\boldsymbol{s}^0 \\
\boldsymbol{s}^1\\
\boldsymbol{s}^2 
\end{array}\right] = \left[\begin{array}{ccc}
\mathbf{0} & \mathbf{B}_1 & \mathbf{0} \\
\mathbf{B}_1^{\top} & \mathbf{0} & \mathbf{0} \\
\mathbf{0} & \mathbf{0} & \mathbf{0}
\end{array}\right]\left[\begin{array}{c}
\widetilde{\boldsymbol{s}}^0 \\
\widetilde{\boldsymbol{s}}^1\\
\widetilde{\boldsymbol{s}}^2
\end{array}\right]=\left[\begin{array}{c}
\mathbf{B}_1\widetilde{\boldsymbol{s}}^1 \\
\mathbf{B}_1^\top\widetilde{\boldsymbol{s}}^0\\
\mathbf{0} 
\end{array}\right].
\end{equation}
This means that $\boldsymbol{s}^0 = \mathbf{B}_1\widetilde{\boldsymbol{s}}^1$, $\boldsymbol{s}^1 = \mathbf{B}_1^\top\widetilde{\boldsymbol{s}}^0\in \operatorname{span}\left(\mathbf{B}_1^\top\right)$ and $\boldsymbol{s}^2 = \mathbf{0}$, which proves~\eqref{eq:dirac_to_hodge1} completes. 

The proof of~\eqref{eq:dirac_to_hodge2} is analogous to that of~\eqref{eq:dirac_to_hodge1}.

To prove~\eqref{eq:dirac_to_hodge3}, we note that the condition $\left[\boldsymbol{s}^0\|\boldsymbol{s}^1\| \boldsymbol{s}^2\right]\in \operatorname{kernel}\left(\mathbf{D}\right)$ implies that $\mathbf{D} \left[\boldsymbol{s}^0\|\boldsymbol{s}^1\| \boldsymbol{s}^2\right] = \boldsymbol{0}$. Multiplying both sides of this equality by $\mathbf{D}$ yields $\mathbf{D}^2 \left[\boldsymbol{s}^0\|\boldsymbol{s}^1\| \boldsymbol{s}^2\right] = \mathbf{D} \boldsymbol{0} = \boldsymbol{0}$. Next, use the definition of $\mathbf{D}^2$ and write
\begin{equation}
 \left[\begin{array}{ccc}
\mathbf{L}_0 & \mathbf{0} & \mathbf{0} \\
\mathbf{0} & \mathbf{L}_1 & \mathbf{0} \\
\mathbf{0} & \mathbf{0} & \mathbf{L}_2
\end{array}\right]\left[\begin{array}{c}
\boldsymbol{s}^0 \\
\boldsymbol{s}^1\\
\boldsymbol{s}^2 
\end{array}\right]=\left[\begin{array}{c}
\mathbf{L}_0\boldsymbol{s}^0 \\
\mathbf{L}_1\boldsymbol{s}^1\\
\mathbf{L}_2\boldsymbol{s}^2 
\end{array}\right]=\mathbf{0}. 
\end{equation}
This implies that $\mathbf{L}_1\boldsymbol{s}^1 = \mathbf{0}\Rightarrow\boldsymbol{s}^1\in \operatorname{kernel}\left(\mathbf{L}_1\right)$, completing the proof of~\eqref{eq:dirac_to_hodge3} and, as a result, the proof of Proposition~\ref{prop:hodge_dirac}.

\bibliographystyle{IEEEtran}
\bibliography{mybib}

\end{document}